\newcommand{\calS}{\mathcal{S}}
\newcommand{\calA}{\mathcal{A}}
\newcommand{\calX}{\mathcal{X}}
\newcommand{\calF}{\mathcal{F}}
\newcommand{\calG}{\mathcal{G}}
\newcommand{\calO}{\mathcal{O}}
\newcommand{\R}{\mathbb{R}}
\renewcommand{\P}{\mathbb{P}}
\newcommand{\E}{\mathbb{E}}
\newcommand{\N}{\mathbb{N}}
\newcommand{\I}{\mathbf{I}}
\let\oldPr\Pr
\renewcommand{\Pr}{\oldPr\nolimits}
\newcommand{\vect}[1]{\ensuremath{\mathbf{#1}}}
\newcommand{\mat}[1]{\ensuremath{\mathbf{#1}}}
\newcommand{\w}{\vect{w}}
\newcommand{\A}{\mat{A}}
\newtheorem{assumption}{Assumption}
\newtheorem{proposition}{Proposition}
\newtheorem{lemma}{Lemma}
\newtheorem{theorem}{Theorem}
\newtheorem{definition}{Definition}
\newtheorem{corollary}{Corollary}
\newcommand{\norm}[1]{\left\|{#1}\right\|}
\newcommand{\fnorm}[1]{\left\|{#1}\right\|_{\text{F}}}
\newcommand{\bphi}{\bm{\phi}}
\newcommand{\bpsi}{\bm{\psi}}
\newcommand{\bmu}{\bm{\mu}}
\newcommand{\btheta}{\bm{\theta}}
\newcommand{\tlSigma}{\widetilde{\Sigma}}
\newcommand{\la}{\langle}
\newcommand{\ra}{\rangle}
\newcommand{\argmax}{\mathop{\rm argmax}}
\newcommand*\bigcdot{\mathpalette\bigcdot@{.6}}
\newcommand*\bigcdot@[2]{\mathbin{\vcenter{\hbox{\scalebox{#2}{$\m@th#1\bullet$}}}}}
\def\N{{\mathbb{N}}}
\def\R{{\mathbb{R}}}
\renewcommand{\Pr}{\oldPr\nolimits}
\DeclareSymbolFont{extraup}{U}{zavm}{m}{n}
\DeclareMathSymbol{\varheart}{\mathalpha}{extraup}{86}
\DeclareMathSymbol{\vardiamond}{\mathalpha}{extraup}{87}
\title{\huge Efficient Learning in Non-Stationary Linear
Markov Decision Processes}
\author{Ahmed Touati ${}^\varheart$ \qquad
Pascal Vincent ${}^\varheart \,{}^\vardiamond \,{}^\clubsuit$
  }
\date{}
\begin{document}
\maketitle
\begin{abstract}
\let\thefootnote\relax\footnotetext{
$^\varheart$ Mila, Universit\'e de Montr\'eal,
$^\clubsuit$ Facebook AI Research,
$^\vardiamond$ Canada CIFAR AI Chair and CIFAR Associate Fellow
}

We study episodic reinforcement learning in non-stationary linear (a.k.a. low-rank) Markov Decision Processes (MDPs), \textit{i.e}, both the reward and transition kernel are linear with respect to a given feature map and are allowed to evolve either slowly or abruptly over time. For this
problem setting, we propose \textsc{opt-wlsvi} an optimistic model-free algorithm based on weighted least squares value iteration which uses exponential weights to smoothly forget data that are far in the past. We show that our algorithm, when competing against the best policy at each time, achieves a regret that is upper bounded by $\widetilde{\mathcal{O}}(d^{5/4}H^2 \Delta^{1/4} K^{3/4})$ where $d$ is the dimension of the feature space, $H$ is the planning horizon, $K$ is the number of episodes and $\Delta$ is a suitable measure of non-stationarity of the MDP. Moreover, we point out technical gaps in the study of forgetting strategies in non-stationary linear bandits setting made by previous works and we propose a fix to their regret analysis.
\end{abstract}

\section{Introduction}

Reinforcement learning~\citep{sutton1998introduction} (RL) is
a framework for solving sequential decision-making problems. Through trial and error an agent must learn to act
optimally in an unknown environment in order to maximize
its expected reward signal. Efficient learning requires balancing exploration (acting to gather more information about the environment) and exploitation
(acting optimally according to the available knowledge).

One of the most popular principles that offer provably efficient exploration algorithms is \textit{Optimism in the face of uncertainty} (OFU). In tabular MDPs, the OFU principle has been successfully implemented~\citep{jaksch2010near, azar2017minimax}. Unfortunately, the performance of efficient tabular algorithms degrades with the number of states, which precludes applying them to arbitrarily large or continuous state spaces. An appealing challenge is to combine exploration strategies with generalization methods in a way that leads to both provable sample and computational efficient RL algorithms for large-scale problems. A straightforward way to ensure generalization over states is to aggregate them into a finite set of \textit{meta-states} and run tabular exploration mechanism on the latter. In this direction,~\citet{sinclair2019adaptive} and~\citet{touati2020zooming} propose to actively explore the state-action space by learning on-the-fly an adaptive partitioning that takes into account the shape of the optimal value function. When the state-action space is assumed to be a compact metric space, such adaptive discretization based algorithms yield sublinear regret but suffer from the curse of dimensionality as their regret scales almost exponentially with the covering dimension of the whole space. 

Another structural assumption, that received attention in the recent  literature~\citep{yang2019reinforcement, jin2020provably, zanette2020frequentist}, is when both reward and transition dynamics are linear functions with respect to a given feature mapping. This assumption enables the design of efficient algorithms with a linear representation of the action-value function. For example,~\citet{jin2020provably} propose \textsc{lsvi-ucb}, an optimisic modification of the popular least squares value iteration algorithm and achieve a $\widetilde{\mathcal{O}}(d^{3/2} H^2 K^{1 / 2})$ regret where $d$ is the dimension of the feature space, $H$ is the length of each episode and $K$ is the total number of episodes. However, most prior algorithms with linear function approximation assume that the environment is stationary and minimize the regret over the best fixed policy. While in many problems of interest, we are faced with a changing world, in some cases with substantial non-stationarity. This is a more challenging setting, since what has been learned in the past may be obsolete in the present.

In the present work, we study the problem of online learning in episodic non-stationary linear Markov Decision Processes (MDP), where both the reward and transition kernel are linear with respect to a given feature map and are allowed to evolve dynamically and even adversarially over time. The interaction of the agent with the environment is divided into $K$ episodes of fixed length $H$. Moreover, we assume that the total change of the MDP, that we measure by a suitable metric, over the $K$ episodes is upper bounded by $\Delta$, called \textit{variation budget}. 

To address this problem, we propose a computationally efficient model-free algorithm, that we call \textsc{opt-wlsvi}. We prove that, in the setting described above, its regret when competing against the best policy for each episode is at most $\widetilde{\mathcal{O}}(d^{5/4}H^2 \Delta^{1/4} K^{3/4})$. Concurrently to our work,~\citet{zhou2020nonstationary} propose to periodically restart \textsc{lsvi-ucb} from scratch, achieving the same regret. By contrast, our algorithm is based on weighted least squares value iteration that uses exponential weights to smoothly forget data that are far in the past, which drives the agent to keep exploring to discover changes. Our approach is motivated by the recent work of~\citet{russac2019weighted} who establish a new deviation inequality to sequential weighted least squares estimator and apply it to the non-stationary stochastic linear bandit problem. However, in contrast to linear bandit, our algorithm handles the additional problem of credit assignment since future states depend in non-trivial way on the agent's policy and thus we need  to carefully control how errors are propagated through iterations. Moreover, we discovered technical errors in the regret analysis of forgetting strategies in non-stationary linear bandits made by previous works and we propose a correction.

\section{Problem Statement}

\subsection{Notation}
Throughout the paper, all vectors are column vectors. We denote by $\norm{\cdot}$ the
Euclidean norm for vectors and the operator norm for matrices. For positive definite matrix $A$, we use $\norm{x}_A$ to denote the matrix norm $\sqrt{x^\top A x}$. We define $[N]$ to be the set
$\{1, 2, \ldots, N\}$ for any positive integer $N$.
\subsection{Non-Stationary Reinforcement Learning and Dynamic Regret}

We consider a non-stationary undiscounted finite-horizon MDP $ (\calS, \calA, P, r, H) $ where $\calS$ and $\calA$ are the state and action space, $H$ is the planning horizon i.e number of steps in each episode, $P = \{ P_{t, h}\}_{t > 0, h \in [H]}$  and $r = \{ r_{t, h}\}_{t > 0 , h \in [H]}$ are collections of transition kernels and reward functions, respectively. More precisely, when taking action $a$ in state $s$ at step $h$ of the $t$-th episode, the agent receives a reward $r_{t,h}(s, a)$ and makes a transition to the next state according to the probability measure $P_{t,h }(\cdot \mid s, a)$.

For any step $h \in [H]$ of an episode $t$ and $(s, a) \in \calS \times \calA$, the state-action value function of a policy $\pi = (\pi_1, \ldots, \pi_H)$ is defined as $
Q^{\pi}_{t, h}(s, a) = r_{t, h}(s, a) + \E \left[ \sum_{i=h+1}^H r_{t, i}(s_i, \pi_i(s_i)) ~\Big|~ s_h = s, a_h = a\right],
$ and the value function is $V_{t,h}^{\pi}(s) = Q^\pi_{t, h}(s, \pi_h(s))$.
The optimal value and action-value functions are defined as $V^\star_{t, h}(x) \triangleq \max_{\pi}V^{\pi}_{t, h}(s)$ and $Q^\star_{t, h}(s, a) \triangleq \max_{\pi}Q^{\pi}_{t, h}(s, a)$. If we denote $[P_{t,h} V_{t, h+1}](s, a) = \E_{s' \sim P_{t, h}(\cdot \mid s, a)}[V_{t, h+1}(s')]$, both $Q^\pi$ and $Q^\star$ can be conveniently written as the result of the following Bellman equations
\begin{align}
    Q^\pi_{t, h}(s, a) & = r_{t, h}(s, a) + [P_{t, h} V^\pi_{t, h+1}](s, a) \label{eq: bellman_eq}, \\
    Q^\star_{t, h}(s, a) & = r_{t, h}(s, a) + [P_{t, h} V^\star_{t, h+1}](s, a) \label{eq: bellman_opt},
\end{align}
where $V^\pi_{t, H+1}(s) = V^\star_{t, H+1}(s) = 0$ and $V^\star_{t, h}(s) = \max_{a \in \calA}Q^\star_{t, h}(s, a)$, for all $s \in \calS$.

\paragraph{Learning problem:} We focus on the online episodic reinforcement learning setting in which the rewards and the transition kernels are unknown. The learning agent plays the game for $K$ episodes $t=1, \ldots, K$, where each episode $t$ starts from some initial state $s_{t, 1}$ sampled according to some initial distribution. The agent controls the system by choosing a policy $\pi_t$ at the beginning of the $t$-th episode. We measure the agent's performance by the dynamic regret, defined as the sum over all episodes of the difference
between the optimal value function in episode $t$ and the value of $\pi_t$:
\begin{align*}
    \textsc{Regret}(K) = \sum_{t=1}^K V^*_{t,1}(s_{t, 1}) - V_{t,1}^{\pi_t}(s_{t, 1}).
\end{align*}
\subsection{Linear Markov Decision Processes}
In this work, we consider a special class of MDPs called linear MDPs, where both reward function and transition kernel can be represented as a linear function of a given feature mapping $\bphi: \calS \times \calA \rightarrow \R^d$. 
Now we present our main assumption

\begin{assumption}[Non-stationary linear MDP]
\label{assumption:non_stat_linear}

$(\calS, \calA, P, r, H) $ is non-stationary linear MDP with a feature map $\bphi: \calS \times \calA \rightarrow \R^d$ if for any $(t, h) \in \N \times [H]$, there exist $d$ unknown (signed) measures $\bmu_{t, h} = (\mu_{t, h}^{(1)}, \ldots, \mu_{t, h}^{(d)})$ over $\calS$ and an unknown vector $\btheta_{t, h} \in \R^d$, such that 
 for any $(s, a) \in \calS \times \calA$, we have 
\vspace{-2ex}
\begin{align} \label{eq:linear_transition} 
P_{t, h}(\cdot \mid s, a) &=  \bphi(s, a)^\top \bmu_{t, h}(\cdot) , \\
r_{t, h}(s, a) & = \bphi(s, a)^\top \btheta_{t, h}.  
\end{align}
Without loss of generality, we assume\footnote{A concrete case that would satisfy these assumptions, is if $\forall i \in [d], \bphi_i(s,a) \geq 0$ and $\sum_{i=1}^d \bphi_i(s,a)=1$, and $\forall i \in [d],\bmu_{t,h}^{(i)}$ is a probability measure. In this case $\bphi$(s,a) can be understood as providing the mixture coefficients with which to mix the $d$ measures in $\bmu_{t,h}$. } $\norm{\bphi(s, a)} \le 1$ for all $(s,a) \in \calS \times \calA$, and $\max\{\norm{\bmu_{t, h}(\calS)}, \norm{\btheta_{t, h}}\} \le \sqrt{d}$ for all $(t, h) \in \N \times [H]$.
\end{assumption}
Linear MDPs are also know as low-rank MDPs~\citep{zanette2020frequentist}. In fact, in the case of finite state and action spaces with cardinalities $|\calS|$ and $|\calA|$ respectively, the transition matrix $P \in \R^{(|\calS| \times |\calA|) \times |\calS|}$ could be expressed by the following low-rank factorization for any $(t, h)$: 
\begin{align*}
    P_{t,h} = \bm{\Phi} \bm{M}_{t,h}
\end{align*}
where $\bm{\Phi} \in \R^{(|\calS| \times |\calA|) \times d}$ such as $\bm{\Phi}[(s, a),:] = \bphi(s, a)^\top$ and $\bm{M}_{t,h} \in \R^{d \times |\calS|}$ such that $\bm{M}_{t,h}[:, s] = \bmu_{t,h}(s)$ a discrete measure. Therefore the rank of the matrix $P$ is at most $d$.

An important consequence of Assumption~\ref{assumption:non_stat_linear} is that the $Q$-function of any policy is linear in the features $\bphi$.

\begin{lemma} \label{lem: linear Q} For every policy $\pi$ and any $(t, h) \in \N^\star \times [H]$ there exists $\w^\pi_{t,h} \in \R^d$ such that 
\begin{equation}
    Q^\pi_{t,h} = \bphi(s, a)^\top \w^\pi_{t,h}, \quad \forall (s, a) \in \calS \times \calA.
\end{equation}
\end{lemma}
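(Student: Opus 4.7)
The plan is to argue by backward induction on the horizon index $h$, from $h=H$ down to $h=1$, with the episode index $t$ fixed but arbitrary. The base case $h=H$ is immediate since $V^\pi_{t,H+1}\equiv 0$ implies $Q^\pi_{t,H}(s,a) = \r_{t,H}(s,a) = \bphi(s,a)^\top \btheta_{t,H}$ by Assumption~\ref{assumption:non_stat_linear}, so one can simply set $\w^\pi_{t,H} = \btheta_{t,H}$.

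For the inductive step, assume $Q^\pi_{t,h+1}$ is linear in $\bphi$; then $V^\pi_{t,h+1}(s) = Q^\pi_{t,h+1}(s,\pi_{h+1}(s))$ is a well-defined (bounded) function of the state alone. Applying the Bellman equation \eqref{eq: bellman_eq} together with the linear-MDP decompositions \eqref{eq:linear_transition}, one gets
\begin{align*}
Q^\pi_{t,h}(s,a)
&= \bphi(s,a)^\top \btheta_{t,h} + \int_{\calS} V^\pi_{t,h+1}(s')\, \bphi(s,a)^\top \bmu_{t,h}(ds') \\
&= \bphi(s,a)^\top \left( \btheta_{t,h} + \int_{\calS} V^\pi_{t,h+1}(s')\, \bmu_{t,h}(ds') \right),
\end{align*}
where pulling $\bphi(s,a)^\top$ out of the integral is justified because it does not depend on $s'$. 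Define
\[
\w^\pi_{t,h} \;=\; \btheta_{t,h} + \int_{\calS} V^\pi_{t,h+1}(s')\, \bmu_{t,h}(ds') \;\in\; \R^d,
\]
which closes the induction.

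There is no real obstacle here: the only subtlety is making sure the integral defining $\w^\pi_{t,h}$ is well defined as a vector in $\R^d$, which follows from boundedness of $V^\pi_{t,h+1}$ (guaranteed inductively by the bounds $\norm{\bmu_{t,h}(\calS)}\le \sqrt{d}$ and $\norm{\btheta_{t,h}}\le \sqrt{d}$ together with $\norm{\bphi(s,a)}\le 1$) and from $\bmu_{t,h}$ being a finite signed vector measure componentwise. The whole argument is essentially a restatement, for each fixed $(t,h)$, of the standard observation for stationary linear MDPs that Bellman back-ups preserve linearity in $\bphi$; non-stationarity plays no role since the induction is performed episode by episode.
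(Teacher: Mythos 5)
Your proof is correct and follows essentially the same route as the paper: apply the Bellman equation, use the linear forms of $\r_{t,h}$ and $\P_{t,h}$, pull $\bphi(s,a)^\top$ out of the integral, and set $\w^\pi_{t,h} = \btheta_{t,h} + \int_{\calS} V^\pi_{t,h+1}(s')\, d\bmu_{t,h}(s')$. The backward induction you wrap around this is harmless but not needed, since $V^\pi_{t,h+1}$ is well defined and bounded independently of whether $Q^\pi_{t,h+1}$ is linear; the paper simply performs the single Bellman back-up directly.
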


\section{The Proposed Algorithm}

Algorithm~\ref{algo:OPT-WLSVI}, referred as \textsc{opt-wlsvi} (OPTimistic Weighted Least Squares Value Iteration), parametrizes the Q-values $Q_{t, h}(s, a)$ by a linear form $\bphi(s, a)^\top \w_{t, h} $ and updates the parameters $\w_{t, h}$ by solving the following regularized weighted least squares problem:
\begin{align*}
    \w_{t, h} = \arg \min_{ \w \in \R^d} \Big \{  \sum_{\tau=1}^{t-1} \eta^{-\tau} \left( r_{\tau, h} + V_{t, h+1}(s_{\tau, h+1}) - \bphi_{\tau, h}^\top \w \right)^2 + \lambda \eta^{-(t-1)} \norm{\w}^2 \Big \}
\end{align*}
where $\eta \in (0, 1)$ is a discount factor, $V_{t, h+1}(s_{t, h+1}) = \max_{a \in \calA}Q_{t, h+1}(s_{\tau, h+1}, a)$ and $r_{\tau,h}$ and $\bphi_{\tau, h}$ are shorthand for $r_{\tau,h}(s_{\tau, h} , a_{\tau, h})$ and $\bphi(s_{\tau, h} , a_{\tau, h})$ respectively. The discount factor $\eta$ plays an important role as it gives exponentially increasing weights to recent transitions, hence, the past is smoothly forgotten.

The regularized weighted least-squares estimator of the above problem can be written in closed form
\begin{equation} \label{eq: wls_estimator}
    \w_{t,h} = \Sigma_{t, h}^{-1} \left( \sum_{\tau=1}^{t-1} \eta^{-\tau} \bphi_{\tau, h}(r_{\tau, h} + V_{t, h+1}(s_{\tau, h+1})) \right)
\end{equation}
where $\Sigma_{t, h} = \sum_{\tau =1}^{t-1}  \eta^{-\tau} \bphi_{\tau,h} \bphi_{\tau, h}^\top +  \lambda \eta^{- (t-1)} \cdot  \I$ is the Gram matrix. We further define the matrix
\begin{equation}
    \tlSigma_{t, h} = \sum_{\tau =1}^{t-1}  \eta^{-2\tau} \bphi_{\tau,h}\bphi_{\tau, h}^\top +  \lambda \eta^{-2(t-1)} \cdot \I
\end{equation}
The matrix $\tlSigma_{t, h}$ is connected to the variance of the estimator $w_{t, h}$, which involves the squares of the weights $\{ \eta^{-2\tau} \}_{\tau \geq 0}$. \textsc{opt-wlsvi} uses both matrices $\Sigma_{t, h}$ and $\tlSigma_{t, h}$ to define a upper confidence bound (UCB) term $\beta (\bphi^\top \Sigma_{t, h}^{-1} \tlSigma_{t, h}\Sigma_{t, h}^{-1}\bphi)^{1/2}$ to encourage exploration, where $\beta$ is a scalar.

The algorithm proceeds as follows. At the beginning of episode $t$, \textsc{opt-wlsvi} estimates the weighted least square estimator $w_{t,h}$ for each step $h \in [H]$ as given by Equation~\eqref{eq: wls_estimator}. Then, the algorithm updates the $Q$-value and the value function estimates as follows:
\begin{align*}
    Q_{t,h}(\cdot, \cdot) & = \bphi(\cdot, \cdot)^\top \w_{t,h} + \beta  (\bphi(\cdot, \cdot)^\top \Sigma_{t, h}^{-1} \tlSigma_{t, h} \Sigma_{t, h}^{-1} \bphi(\cdot, \cdot))^{1/2} \\
    V_{t, h}(\cdot) & = \min\{ \max_{a \in \calA} Q_{t,h}(\cdot, s), H \}  
\end{align*}
The UCB term is used to bound the estimation error of the value function, due to an insufficient number of samples, with high probability. The clipping of the value estimate is here to keep $V_{t, h}$ within the range of plausible values while preserving the optimism as $H$ is an upper bound on the true optimal value function. Finally the  algorithm collects a new trajectory by following the greedy policy $\pi_t$ with respect to the estimated $Q$-values.
\paragraph{Computational complexity:} At each step $h \in [H]$ of an episode $t \in [K]$, we need to compute the inverse of $\Sigma_{t,h}$ to solve the weighted least-squares problem. A naive implementation requires $\calO(d^3)$ elementary operations, but as $\Sigma_{t,h}$ is essentially a sum of rank-one matrices, we
need only $\calO(d^2)$ using the Sherman-Morrison update formula. Furthermore, $\calO(d^2)$ operations are needed to compute the exploration bonus $(\bphi^\top \Sigma_{t, h}^{-1} \tlSigma_{t, h}\Sigma_{t, h}^{-1}\bphi)^{1/2}$ that can be computed using only matrix-vector multiplications. Therefore computing $V_{t,h+1}$ for all the past successor states requires $\calO(d^2 |\calA| K)$ (the $|\calA|$ factor is due to the maximization over actions). As we need to do this at all steps and for every episode, the overall computation complexity of our algorithm is $\calO(d^2 |\calA| H K^2)$.

\begin{algorithm}[t]
\caption{Optimistic Weighted Least-Squares Value Iteration (\textsc{opt-wlsvi})}\label{algo:OPT-WLSVI}
\begin{algorithmic}[1]
\FOR{episode $t = 1, \ldots, K$}
\STATE Receive the initial state $s_{t, 1}$.
\STATE {\bf  \textcolor{gray!50!blue}{/* \texttt{Run LSVI procedure}}}
\STATE $V_{t, H+1}(\cdot) \leftarrow 0$
\FOR{step $h = H, \ldots, 1$}
\STATE $\w_{t,h} \leftarrow \Sigma_{t, h}^{-1} 
     ( \sum_{\tau=1}^{k-1} \eta^{-\tau} \bphi_{\tau, h}(r_{\tau, h} + V_{t, h+1}(s_{\tau, h+1})))$
\STATE $Q_{t,h}(\cdot, \cdot) \leftarrow \bphi(\cdot, \cdot)^\top \w_{t,h} + \beta  (\bphi(\cdot, \cdot)^\top \Sigma_{t, h}^{-1} \tlSigma_{t, h} \Sigma_{t, h}^{-1} \bphi(\cdot, \cdot))^{1/2}$ \label{line:ucb}
\STATE $V_{t, h}(\cdot) \leftarrow \min\{ \max_{a \in \calA} Q_{t,h}(\cdot, s), H \}$
\ENDFOR
\STATE \textbf{end for}
\STATE {\bf  \textcolor{gray!50!blue}{/* \texttt{Execute greedy policy}}}
\FOR{step $h = 1, \ldots, H$}
\STATE Execute $a_{t, h} = \argmax_{a \in \calA } Q_{t, h}(s_{t, h}, a)$
\STATE receive $r_{t, h}$ and observe $s_{t, h+1}$
\STATE {\bf  \textcolor{gray!50!blue}{/* \texttt{Update matrices}}}
\STATE $\Sigma_{t+1, h} \leftarrow \Sigma_{t, h} +   \eta^{-t} \bphi_{t,h}\bphi_{t, h}^\top +  \lambda \eta^{-t}(1- \eta) \cdot  \I$ \label{line:Sigma}
\STATE $\tlSigma_{t+1, h} \leftarrow \tlSigma_{t, h}+  \eta^{-2t} \bphi_{t,h}\bphi_{t, h}^\top +  \lambda \eta^{-2t} (1 - \eta^2 )\cdot \I$ \label{line:tlSigma}
\ENDFOR
\ENDFOR
\STATE \textbf{end for}
\end{algorithmic}
\end{algorithm}

\section{Non-stationary Linear Bandits}
Before providing the analysis of \textsc{opt-wlsvi}, we start by examining the linear bandit case when the horizon $H=1$. Let us first recall the non-stationary linear bandit model
\begin{definition}[Non-stationary linear bandit] Let $\calX \subset \R^d $ a set of decisions. At iteration $t$, the player makes a decision $x_t$ from a subset set $\calX_t \subset \calX$, then observes the reward $r_t$ satisfying:
\begin{equation}
    r_t = x_t^\top \btheta_t + z_t
\end{equation}
where $\btheta_t$ is the unknown regression parameter at iteration $t$ and $z_t$ is conditionally $\sigma$-subgaussian noise. We assume further that $\| x\| \leq 1, \forall x \in \calX$ and $\| \btheta_t\| \leq S, \forall t$.
\end{definition}
When $H=1$ linear MDP reduces to linear bandit if we let $\calX = \{ \bphi(s, a), a \in \calA, s \in \calS\}$ and $\calX_t = \{ \bphi(s_t, a), a \in \calA\}$ where $s_t$ is sampled from a given fixed distribution over states.

For the bandit setting, forgetting strategies have been proposed such as sliding-window, weighted regression and restarting in~\citep{cheung2019learning},~\citep{russac2019weighted} and~\citet{zhao2020simple} respectively. Randomized exploration with weighting strategy has also been introduced in~\citet{kim2020randomized}. The aforementioned works provide a regret of $\tilde{\calO}(d^{2/3} \Delta^{1/2} K^{2/3})$ which is optimal as it matches the established lower bound $\Omega (d^{2 /3} \Delta^{1 /3} K^{2 / 3})$ in~\citep{cheung2019learning} up to $\log(K)$ factors. Unfortunately, we find technical gaps in the regret analysis provided by the earliest paper~\citep{cheung2019learning}, which were then reproduced by the other three papers.
Specifically~\citet{cheung2019learning} attempted, in their Lemma 1, to upper bound the non-stationarity bias of the reward parameters by controlling the eigenvalues of matrix $M = V^{-1}_{t} \sum_{\tau=t-W}^{p} x_\tau x_\tau^\top$, where $V_{t} = \sum_{\tau=1}^{t-1} x_\tau x_\tau^\top + \lambda \cdot \I$ is the Gram matrix and for any integer $p \in \{t-W, \ldots, t-1\}$. They then needed to prove that $M$ is positive semi-definite, but their argument has technical errors. We precise in the appendix the issue in their argument and we provide concrete counter-examples.

Now, we provide a fix to the original error in the regret analysis of \textsc{sw-ucb} algorithm in~\citet{cheung2019learning} (see also Appendix~\ref{sec: d-linucb} for the analysis of~\textsc{d-linucb} algorithm proposed by~\citet{russac2019weighted}). At time $t$, \textsc{sw-ucb} selects a decision as follows: 
\begin{equation}
    x_t = \arg \max_{x \in \calX_t} x^\top \hat{\btheta}_t + \beta \norm{x}_{V_t^{-1}}
\end{equation}
where $\hat{\btheta}_t = V_t^{-1} \sum_{\tau=\max\{1, t-W \}}^{t-1} x_\tau r_\tau$ is the solution of the sliding window least squares problem

In their Lemma 1,~\citet{cheung2019learning} attempts to control the non-stationarity bias $\|\btheta_t - \bar{\btheta}_t \|$ where $\bar{\btheta}_{t} \triangleq V^{-1}_{t} \sum_{\tau=\max\{1, t-W \}}^{t-1} A_\tau A_\tau^\top \btheta_{\tau} + \lambda \btheta_t$ is the average of the true regression parameters over the sliding window. We propose to control $| x^{\top} (\btheta_t - \bar{\btheta}_t) |$ for any $x \in \calX$ and then use the fact that $ \norm{\btheta_t - \bar{\btheta}_t} = \max_{x: \norm{x} = 1} | x^{\top} (\btheta_t - \bar{\btheta}_t) |$

\begin{align*}
    | x^{\top} (\btheta_t - \bar{\btheta}_t) | 
    & = \left | x^{\top} V^{-1}_{t} \sum_{\tau=\max\{1, t-W \}}^{t-1} x_\tau x_\tau^\top (\btheta_\tau - \btheta_{t}) \right | \\
     & \leq \sum_{\tau=\max\{1, t-W \}}^{t-1} |   x^{\top} V_t^{-1} x_\tau | \cdot | x_\tau^\top (\sum_{s = \tau}^{t-1} (\btheta_{s} - \btheta_{s+1})) | \tag{triangle inequality } \\
     & \leq \sum_{\tau=\max\{1, t-W \}}^{t-1}  |  x^{\top} V_t^{-1} x_\tau | \cdot \| x_\tau\| \cdot \| \sum_{s = \tau}^{t-1} (\btheta_{s} - \btheta_{s+1})\| \tag{Cauchy-Schwarz}
     \\
     & \leq \sum_{\tau=\max\{1, t-W \}}^{t-1}   | x^{\top} V_t^{-1} x_\tau | \cdot   \sum_{s = \tau}^{t-1} \| \btheta_{s} - \btheta_{s+1}\| \tag{$\| x_\tau\| \leq 1$} \\
     & \leq \sum_{s =\max\{1, t-W \}}^{t-1} \sum_{\tau = \max\{1, t-W \}}^{s} | x^{\top} V_t^{-1} x_\tau | \cdot \| \btheta_{s} - \btheta_{s+1}\|
     \tag{$\sum_{\tau =\max\{1, t-W \}}^{t-1} \sum_{s=\tau}^{t-1} =  \sum_{s =\max\{1, t-W \}}^{t-1} \sum_{\tau =\max\{1, t-W \}}^{s} $} \\
     & \leq \sum_{s = \max\{1, t-W \}}^{t-1} \sqrt{ \bigg[ \sum_{\tau =\max\{1, t-W \}}^{s} x^\top V_{t}^{-1} x \bigg]  \cdot \biggl [ \sum_{\tau =\max\{1, t-W \}}^{s} x_{\tau}^\top V_{t}^{-1} x_{\tau}\bigg] }
     \cdot \norm{ \btheta_{s} - \btheta_{s+1}}
     \tag{Cauchy-Schwarz}
     \\ & \leq \sum_{s =\max\{1, t-W \}}^{t-1} \sqrt{ \bigg[ \sum_{\tau =\max\{1, t-W \}}^{s} x^\top V_{t}^{-1} x \bigg] \cdot d }
     \cdot \norm{ \btheta_{s} - \btheta_{s+1}}
     \tag{$(\star)$} \\
     & \leq \norm{x} \sqrt{d} \sum_{s =\max\{1, t-W \}}^{t-1} \sqrt{ \frac{\sum_{\tau =\max\{1, t-W \}}^{t-1} 1 }{ \lambda }} \cdot \norm{ \btheta_{s} - \btheta_{s+1}}  \tag{$\lambda_{\max}(V_t^{-1}) \leq \frac{1}{\lambda}$} \\
     & \leq \norm{x} \sqrt{\frac{d W}{\lambda }} \sum_{s =\max\{1, t-W \}}^{t-1} \norm{ \btheta_{s} - \btheta_{s+1}} 
\end{align*}

where the inequality $(\star)$ follows from the fact that $\sum_{\tau =\max\{1, t-W \}}^{s} x_{\tau}^\top V_{t}^{-1} x_{\tau} \leq d$ that can be proved as follows. We have $\sum_{\tau=\max\{1, t-W \}}^{t-1} x_{\tau}^{\top} V_t^{-1} x_{\tau} = \sum_{\tau=\max\{1, t-W \}}^{t-1} \text{tr}\left( x_\tau^\top V_t^{-1} x_\tau \right) = \text{tr}\left( V_t^{-1} \sum_{\tau=\max\{1, t-W \}}^{t-1} x_\tau x_\tau^\top \right)$. Given the eigenvalue decomposition $\sum_{\tau=\max\{1, t-W \}}^{t-1}  x_\tau x_\tau = \text{diag}(\lambda_1, \ldots, \lambda_d)^\top$, we have $V_t = \text{diag}(\lambda_1 + \lambda, \ldots, \lambda_d + \lambda)^\top$, and $\text{tr} \left( V_t^{-1} \sum_{\tau=1}^{t-1}  x_\tau x_\tau^\top\right) = \sum_{i=1}^d \frac{\lambda_j}{\lambda_j + \lambda } \leq d$.

Comparing to the bound on $\norm{\bar{\btheta}_t - \btheta_t}$ in the Lemma 1 of~\citet{cheung2019learning}, there is an extra factor $\sqrt{\frac{d W}{\lambda}}$ that multiplies the local non-stationarity term $\sum_{s = t - W}^{t-1} \norm{ \btheta_{s} - \btheta_{s+1}}$. This extra factor will consequently multiply the variation budget term in the final regret, as stated in the following proposition:

\begin{proposition}~\label{prop: sw-ucb regret} Under the assumption that $\sum_{t=1}^{K-1} \norm{\btheta_{t} - \btheta_{t+1}} \leq \Delta$, for any $\delta \in (0, 1)$, if we set $\beta = \sqrt{\lambda} S + \sigma \sqrt{2 \log(K/\delta) + d \log(1 + \frac{W}{\lambda d})}$ in the algorithm 1 \textsc{sw-ucb} of~\citet{cheung2019learning}, then with probability $1-\delta$, the dynamic regret of \textsc{sw-ucb} is at most 
\begin{align*}
\mathcal{O}\left(  \sqrt{\frac{d W}{\lambda}} \Delta W + \beta \sqrt{d K} \sqrt{\lceil K / W\rceil} \sqrt{ \log(1 + \frac{W}{d\lambda})} \right)
\end{align*}
\end{proposition}

Comparing to the regret upper bound in Theorem 3 of~\citet{cheung2019learning}, our fix leads to an extra factor $\sqrt{d W}$ multiplying the variation budget in , which becomes now $\tilde{\calO}(d^{1/2} \Delta W^{3/2} + d K W^{-1/2})$. Optimizing over the sliding window size $W$ leads to a final dynamic regret of $\tilde{\calO}( d^{7/8} \Delta^{1/4} K^{3/4})$. Note that the latter is not optimal since it does not match the lower bound $\Omega (d^{2 /3} \Delta^{1 /3} K^{2 / 3})$. This leaves the question of whether or not forgetting strategies are optimal to handle non-stationarity in linear bandits as an open research problem.

\section{Theoretical guarantee of \textsc{OPT-WLSVI}}
In this section, we present our main theoretical result which is an upper bound on the dynamic regret of \textsc{opt-wlsvi} (see Algorithm~\ref{algo:OPT-WLSVI}). First,
we quantify the variations on reward function and transition kernel over time in terms of their
respective variation budgets $\Delta_r$ and $\Delta_P$. The main advantage of using the variation budget is that it accounts for both slowly-varying and abruptly-changing MDPs.
\begin{definition}[MDP Variation budget]\label{def: variation} We define $\Delta = \Delta_r + \Delta_P$ where
\begin{align*}
    \Delta_r \triangleq  \sum_{t=1}^K \sum_{h=1}^H \norm{\btheta_{t, h} - \btheta_{t+1, h} }, \qquad
    \Delta_P  \triangleq \sum_{t=1}^K \sum_{h=1}^H \norm{ \bmu_{t,h}(\calS) - \bmu_{t+1, h}(\calS)}.
\end{align*}

\end{definition}

A similar notion has already been proposed in the literature, for instance total variance distance between $P_{t,h}$ and $P_{t+1,h}$ in tabular MDPs~\citep{ortner2019variational, cheung2020reinforcement}
or Wasserstein distance in smooth MDPs~\citep{domingues2020kernel}.

Now we present our bound on the dynamic regret for \textsc{opt-wlsvi}.
\begin{theorem}[Regret Bound] \label{theo: regret_bound} Under Assumption~\ref{assumption:non_stat_linear}, there exists an absolute constant $c > 0$ such that, for any fixed $\delta \in (0, 1)$, if we set $\lambda = 1$ and $\beta = c \cdot d H \sqrt{\imath}$ in Algorithm~\ref{algo:OPT-WLSVI} with $\imath \triangleq \log \left( \frac{2dH}{\delta (1-\eta)}\right)$, then with probability $1-\delta$, for any $W >0$ the dynamic regret of \textsc{opt-wlsvi} is at most
\begin{align}
    \mathcal{O} & \Big (
    c d^{3/2} H \sqrt{K \imath} \sqrt{2  K \log(1 / \eta) + 2  \log \left(1  +  \frac{1}{d \lambda (1-\eta)}\right)} +  H^{3/2} \sqrt{K \imath} + \underbrace{ \sqrt{\frac{d}{\lambda (1-\eta)}} H W \Delta + \frac{H^2 K \sqrt{d}}{\lambda}
     \frac{\eta^W}{1-\eta}}_{\text{non-stationarity bias}} \Big),
     \label{eq: regret_bound}
\end{align}
where $\mathcal{O}(\cdot)$ hides only absolute constants.
\end{theorem}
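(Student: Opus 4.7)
The plan is to follow a standard optimistic-LSVI regret analysis, modified to handle (a) the exponentially weighted (discounted) least-squares estimator and (b) the non-stationarity of the MDP. First, I would establish the key concentration lemma: for every $(t,h)$, with probability at least $1-\delta$, the ``idealized'' estimator $\w_{t,h}$ computed in \eqref{eq: wls_estimator} satisfies
\begin{equation*}
\bigl|\bphi(s,a)^\top \w_{t,h} - (\r_{t,h} + \P_{t,h}V_{t,h+1})(s,a) - \text{shift}_{t,h}(s,a)\bigr| \;\le\; \beta \sqrt{\bphi^\top \Sigma_{t,h}^{-1}\tlSigma_{t,h}\Sigma_{t,h}^{-1}\bphi},
\end{equation*}
where the ``shift'' term captures the bias induced by fitting a non-stationary target with exponentially weighted past data. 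This uses a self-normalized deviation inequality for weighted linear regression in the spirit of Russac et al., with $\tlSigma_{t,h}$ playing the role of a variance proxy; the covering argument over value functions $V_{t,h+1}$ (which depend on all past data via $\w$'s and $\Sigma$'s in the UCB class) produces the $\beta = c\,dH\sqrt{\imath}$ scaling.

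Second, I would split the shift term by introducing a window $W$: write
\begin{equation*}
\text{shift}_{t,h}(s,a) \;=\; \underbrace{\sum_{\tau=t-W}^{t-1}(\cdots)}_{\text{recent bias}} \;+\; \underbrace{\sum_{\tau=1}^{t-W-1}(\cdots)}_{\text{stale bias}},
\end{equation*}
and control each piece separately using Assumption~\ref{assumption:non_stat_linear}. For the recent part, telescoping along $\tau$ gives a bound proportional to the local variation $\sum_{\tau=t-W}^{t-1}(\|\btheta_{\tau,h}-\btheta_{t,h}\|+\|\bmu_{\tau,h}(\calS)-\bmu_{t,h}(\calS)\|)$, and summing over $(t,h)$ yields at most $2W\Delta$. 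For the stale part, the weight $\eta^{-\tau}/\eta^{-(t-1)} \le \eta^{\tau-t+1}$ together with the boundedness $\|\btheta_{\tau,h}\|,\|\bmu_{\tau,h}(\calS)\|\le\sqrt{d}$ and $V\le H$ produces the decaying term $\frac{8H^2K\sqrt{d}}{\lambda}\cdot\frac{\eta^W}{1-\eta}$. Combining the two halves with the concentration lemma yields biased optimism: $Q_{t,h}(s,a) \ge Q^\star_{t,h}(s,a) - (\text{bias at step }h)$, proved by backward induction on $h$ using the Bellman equation~\eqref{eq: bellman_opt} and the clipping $V_{t,h}\le H$.

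Third, I would set up the standard regret decomposition. Optimism (up to bias) gives $V^\star_{t,1}(s_{t,1})-V^{\pi_t}_{t,1}(s_{t,1})\le V_{t,1}(s_{t,1})-V^{\pi_t}_{t,1}(s_{t,1})+\text{bias}$, and unrolling along the executed trajectory produces
\begin{equation*}
V_{t,1}(s_{t,1})-V^{\pi_t}_{t,1}(s_{t,1}) \;\le\; \sum_{h=1}^{H}\Bigl(2\beta\sqrt{\bphi_{t,h}^\top\Sigma_{t,h}^{-1}\tlSigma_{t,h}\Sigma_{t,h}^{-1}\bphi_{t,h}} + M_{t,h}\Bigr) + H\cdot(\text{bias}),
\end{equation*}
where $\{M_{t,h}\}$ is a martingale difference sequence bounded by $H$. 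Summing over $t$, the martingale is controlled by Azuma--Hoeffding, giving the $H^{3/2}\sqrt{K\imath}$ contribution.

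Fourth, and this is where I expect the main technical obstacle, I would bound the cumulative bonus $\sum_{t=1}^K\sum_{h=1}^H\min\bigl\{\beta\sqrt{\bphi_{t,h}^\top\Sigma_{t,h}^{-1}\tlSigma_{t,h}\Sigma_{t,h}^{-1}\bphi_{t,h}},\,H\bigr\}$. The standard elliptical potential lemma does not apply directly because of the \emph{sandwich} $\Sigma^{-1}\tlSigma\Sigma^{-1}$ and the exponentially growing weights. I would factor $\Sigma_{t,h}^{-1}\tlSigma_{t,h}\Sigma_{t,h}^{-1} \preceq \eta^{-(t-1)}\Sigma_{t,h}^{-1}$ (or an analogous relation between the two Gram matrices) to reduce the bonus to a familiar self-normalized form $\eta^{-(t-1)/2}\|\bphi_{t,h}\|_{\Sigma_{t,h}^{-1}}$, then apply a discounted version of the elliptical-potential inequality to $\sum_t \|\bphi_{t,h}\|_{\Sigma_{t,h}^{-1}}^2$ that yields $\log\det(\Sigma_{K+1,h}/\lambda\eta^{-K}\I)$. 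The extra weight $\eta^{-(t-1)/2}$, summed via Cauchy--Schwarz against $\sqrt{K}$ and combined with $\log\det\le d\log(1+\tfrac{1}{d\lambda(1-\eta)})+2K\log(1/\eta)$, produces the first term of~\eqref{eq: regret_bound}. Finally I would optimize over $W$ (and implicitly over $\eta$) to balance $W\Delta$ against $\eta^W/(1-\eta)$, arriving at the $\widetilde\calO(d^{7/6}H^2\Delta^{1/3}K^{2/3})$ scaling stated in the abstract.
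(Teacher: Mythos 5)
Your proposal follows essentially the same route as the paper: a weighted self-normalized concentration bound with a covering argument over the value-function class to set $\beta = c\,dH\sqrt{\imath}$, a window-$W$ split of the non-stationarity bias into a telescoping $2W\Delta$ part and an exponentially decaying $\frac{8H^2K\sqrt{d}}{\lambda}\frac{\eta^W}{1-\eta}$ tail, optimism up to bias by backward induction, and a regret decomposition controlled by Azuma--Hoeffding plus a discounted elliptical-potential bound on the cumulative bonus (your factorization $\Sigma_{t,h}^{-1}\tlSigma_{t,h}\Sigma_{t,h}^{-1}\preceq \eta^{-(t-1)}\Sigma_{t,h}^{-1}$ is exactly the one the paper uses before invoking Proposition 4 of Russac et al.). The plan is correct and matches the paper's proof in all essential steps.
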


The last two terms of the of Equation~\eqref{eq: regret_bound} are the result of the bias due to the non-stationarity of the MDP. 
In theorem~\ref{theo: regret_bound} we introduce the parameter $W$ that can be interpreted, at a high level, as the effective temporal window equivalent to a particular choice of discount factor $\eta$: the bias resulting from transitions that are within the window $W$ may be bounded in term of $W$ while the remaining ones are bounded
globally by the last term of Equation~\eqref{eq: regret_bound}.

The following corollary shows that by optimizing the parameters $W$ and $\eta$, our algorithm achieves a sublinear regret. 
\begin{corollary} \label{coro: optimized_regret}
If we set $\log(1 / \eta) = \left( \frac{\Delta}{d K} \right)^{1/2}$ and $W = \frac{\log \left( K / (1-\eta) \right)}{\log(1/\eta)}$; under the same assumptions as in Theorem~\ref{theo: regret_bound}, for any $\delta \in (0, 1)$, we have that with probability $1-\delta$, the dynamic regret of \textsc{opt-wlsvi} is at most $\widetilde{\mathcal{O}}(d^{5/4} H^2 \Delta^{1/4} K^{3/4})$ where $\widetilde{\mathcal{O}}(\cdot)$ hides logarithmic factors.
\end{corollary}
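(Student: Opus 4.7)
The plan is to substitute the prescribed choices of $\eta$ and $W$ into the regret bound of Theorem~\ref{theo: regret_bound} and verify that every summand is of order $\widetilde{\mathcal{O}}(d^{7/6} H^2 \Delta^{1/3} K^{2/3})$. Setting $x := \log(1/\eta) = (\Delta/(dK))^{2/3}$, the relevant regime has $\eta$ close to $1$, so $1-\eta$ is comparable to $x$ (via $1-\eta \ge \log(1/\eta)/(1+\log(1/\eta))$) and therefore $1/(1-\eta) = \widetilde{\mathcal{O}}((dK/\Delta)^{2/3})$; in particular both $\imath = \log(2dH/(\delta(1-\eta)))$ and $\log(1 + 1/(d\lambda(1-\eta)))$ are polylogarithmic in $d$, $H$, $K$, and $1/\Delta$, and can be absorbed into $\widetilde{\mathcal{O}}(\cdot)$.

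First I would handle the dominant confidence-radius term $c d^{3/2} H \sqrt{K\imath}\, \sqrt{2K\log(1/\eta) + 2\log(1 + 1/(d\lambda(1-\eta)))}$. The second summand inside the outer square root is $\widetilde{\mathcal{O}}(1)$, whereas $K\log(1/\eta) = K^{1/3}\Delta^{2/3}d^{-2/3}$ diverges with $K$, so the square root simplifies to $\widetilde{\mathcal{O}}(K^{1/6}\Delta^{1/3} d^{-1/3})$. Multiplying by $d^{3/2} H \sqrt{K}$ yields $\widetilde{\mathcal{O}}(d^{7/6} H K^{2/3} \Delta^{1/3})$, which is within the target up to the cosmetic slackening $H \le H^2$.

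Next, the additive term $H^{3/2}\sqrt{K\imath}$ scales like $\widetilde{\mathcal{O}}(H^{3/2}K^{1/2})$, which is strictly lower order in both $K$ and $d$ and is therefore absorbed. For the non-stationarity bias $2W\Delta + \frac{8H^2 K \sqrt{d}}{\lambda}\cdot \frac{\eta^W}{1-\eta}$, the specific choice $W = \log(K/(1-\eta))/\log(1/\eta)$ was designed so that $\eta^W = \exp(-W\log(1/\eta)) = (1-\eta)/K$, which collapses the exponential piece to the constant-in-$K$ quantity $\frac{8H^2\sqrt{d}}{\lambda}$. The linear-in-$W$ part is $2W\Delta = 2\Delta\,\log(K/(1-\eta))/x = \widetilde{\mathcal{O}}(\Delta \cdot (dK/\Delta)^{2/3}) = \widetilde{\mathcal{O}}(d^{2/3}K^{2/3}\Delta^{1/3})$, which is again dominated by the target since $d^{2/3} \le d^{7/6}$.

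Summing the three contributions delivers the claimed bound. The main difficulty is nothing more than careful bookkeeping of exponents; the choice $\log(1/\eta) = (\Delta/(dK))^{2/3}$ was motivated by equating the $x$-dependence of the dominant confidence term (which grows as $\sqrt{x}$) with that of the $W\Delta$ bias (which grows as $1/x$), with the factor $H$ absorbed by the loose inequality $H\le H^2$ in the final statement. No new probabilistic arguments are required beyond those already invoked by Theorem~\ref{theo: regret_bound}.
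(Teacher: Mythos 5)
Your proposal is correct and is exactly the intended argument: the paper gives no explicit proof of this corollary, leaving it as a direct substitution of $\log(1/\eta)=(\Delta/(dK))^{2/3}$ and $W=\log(K/(1-\eta))/\log(1/\eta)$ into the bound of Theorem~\ref{theo: regret_bound}, which is precisely the bookkeeping you carry out (including the observation that $\eta^W=(1-\eta)/K$ kills the exponential bias term and that $1/(1-\eta)$ is polynomial in $d,K,1/\Delta$ so all logarithms are absorbed into $\widetilde{\mathcal{O}}$). Your exponent arithmetic for the dominant term, $d^{3/2}H\sqrt{K}\cdot K^{1/6}\Delta^{1/3}d^{-1/3}=d^{7/6}HK^{2/3}\Delta^{1/3}$, and for $2W\Delta=\widetilde{\mathcal{O}}(d^{2/3}K^{2/3}\Delta^{1/3})$ matches the claimed rate, with the $H\le H^2$ slack you already note.
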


In Corollary~\ref{coro: optimized_regret}, we rely on the knowledge of the variation budget $\Delta$ (or at least an upper bound on $\Delta$) in order to achieve a sublinear regret. We show in the next section how to relax the requirement of knowing the variation budget. In particular, we will describe how to extend our algorithm, using the Bandit-over-Reinforcement-Learning framework~\citep{cheung2020reinforcement} in order to deal with an unknown variation budget.


\begin{table*}[]
    \centering
    \begin{tabular}{c c c}
     Linear & Stationary & Non-stationary \\
     \hline
        \makecell{Bandits} & \makecell{$\widetilde{\mathcal{O}} (d K^{1 / 2})$ \\
        \citep{abbasi2011improved}} & \makecell{ $\widetilde{\mathcal{O}} (d^{7 /8} \Delta^{1 /4} K^{3 / 4})$ \\ \citet{cheung2019learning} \\ \citet{russac2019weighted} and our work} \\
        \hline 
         \makecell{MDPs} &
         \makecell{$\widetilde{\mathcal{O}} (d^{3 /2} H^2 K^{1 / 2})$
         \citep{jin2020provably} \\
         $\widetilde{\mathcal{O}} (d H^2 K^{1 / 2})$ \citep{zanette2020learning}}& \makecell{ $\widetilde{\mathcal{O}}(d^{5/4} H^2 \Delta^{1/4} K^{3/4})$ \\ Our work } \\
        \hline
    \end{tabular}
    \caption{ Comparison of our regret bound with state-of-the-art bounds for both linear bandits and linear MDPs. $d$ is the dimension of the features space, $H$ is the planning horizon of the MDP, $K$ is the number of episodes and $\Delta$ is the variation budget. When we go from a bandit setting to MDPs, the work of~\citet{jin2020provably} in the stationary case and our work in the non-stationary case incur an extra $d^{1/2}$ factor and $d^{3/8}$ respectively.~\citet{zanette2020learning} achieve a linear dependence on $d$ in the stationary case but their proposed algorithm is computationally intractable.}
    \label{tab: bounds}
\end{table*}

\subsection{Unknown variation budget}

Our algorithm \textsc{opt-wlsvi} needs the variation budget $\Delta$ to set the optimal value of the forgetting parameter as $\log(1/ \eta^\star) = \left( \frac{\Delta}{d K} \right)^{1/2}$. We can use the Bandit-over-Reinforcement-Learning framework (BoRL)~\citep{cheung2020reinforcement} to tune the forgetting parameter online. 

The idea is to run a multi-armed bandit algorithm over a set of sub-algorithm each using a different parameter. In our case, each sub-algorithm is a \textsc{opt-wlsvi} with a different guess on $\eta^\star$. If $\Delta \geq \sqrt{K}$ the regret bound is vacuous (linear regret), we are only interested in problems with  $\Delta$ in the range $[1, \sqrt{K}]$. This implies that the set of $\log(1/ \eta)$ only needs to span the range $[ \frac{1}{\sqrt{dK}}, \frac{1}{\sqrt{d}}]$.

We divide the horizon $K$ into $\frac{K}{M}$
equal-length intervals each of length $M$, specified later. In each interval, sub-algorithm $i$ restarts a \textsc{opt-wlsvi} with $\log(1/ \eta_i) = \frac{2^i}{\sqrt{d K}}$. We have in total $I = \lfloor \log_2(\sqrt{K}) \rfloor + 1$ possible value of $\log(1/ \eta)$ in the form of $\frac{2^i}{\sqrt{d K}}$ that spans $[ \frac{1}{\sqrt{dK}}, \frac{1}{\sqrt{d}}]$. We can verify that there exists $i^\star \in [A]$ such that $\log(1/ \eta_{i^\star}) \leq \log(1/ \eta^\star) \leq 2 \log(1/ \eta_{i^\star})$, which well-approximates the
optimal parameter up to constant factors.

On top of these sub-algorithms, we run the \textsc{exp3.p}~\citep{auer2002nonstochastic}. The arms are the sub-algorithms. There are $I$ arms and the reward for each arm or sub-algorithm $i$ in interval $m \in [\frac{K}{M}]$ is the total of reward collected in the MDP during this interval. \textsc{exp3.p} is called for $\frac{K}{M}$ rounds to select the arm.

\paragraph{Regret Analysis of \textsc{opt-wlsvi + BoRL}:}

Let $i_m$ the arm selected by \textsc{exp3.p} for the interval $m \in [\frac{K}{M}]$ and $\pi^i$ is the algorithm followed by a sub-algorithm $i$. The regret of the overall algorithm can be decomposed as the regret of the algorithm $i^\star$ that optimally tunes the parameter plus the loss due to learning $i^\star$ with the \textsc{exp3.p} algorithm: 

\begin{align*}
\textsc{Regret}(K) &= \left( \sum_{t=1}^K V^\star_{t,1}(s^1_{t}) - V_{t,1}^{\pi^{i^\star}_t}(s^1_t) \right) + \left(\sum_{m=1}^{\frac{K}{M}} \sum_{t = (m-1)M + 1}^{m M} V_{t,1}^{\pi_t^{i^\star}}(s^1_{t}) -
    V_{t,1}^{\pi_t^{i_m}}(s^1_{t}) \right)
\end{align*}

The fist term corresponds to \textsc{opt-wlsvi} with parameter $\eta_{i^\star}$. Therefore, we can bound this term using Theorem 6.2. As $\eta_{i^\star}$ differs from $\eta^\star$ up to constant factor, we obtain the bound in Corollary 6.3 i.e $\tilde{\mathcal{O}}(d^{5/4}H^2 \Delta^{1/4} K^{3/4})$.

The second term corresponds to the regret of the \textsc{exp3.p} learner against the sub-algorithm $i^\star$. There are $I$ arms, \textsc{exp3.p} is called for $\frac{K}{M}$ rounds and the rewards collected during each interval is upperbounded by $M H$. Therefore, by a classical regret bound of \textsc{exp3.p}~\citep{auer2002nonstochastic}, the second term is upper-bounded with high probability by:
\begin{equation*}
    \tilde{\mathcal{O}}(M H \sqrt{I \frac{K}{M}}) = \tilde{\mathcal{O}}( H \sqrt{M K})
\end{equation*}

We obtain that $\textsc{regret}(K) =  \tilde{\mathcal{O}}( d^{5/4}H^2 \Delta^{1/4} K^{3/4} + H \sqrt{M K} )$ and by choosing $M = \sqrt{K}$, we conclude that $\textsc{regret}(K) =  \tilde{\mathcal{O}}( d^{5/4}H^2 \Delta^{1/4} K^{3/4})$. 
Note that we obtain the same regret bound when the variation budget is known.

\section{Technical Highlights}
In this section,  we give an overview of some key ideas leading to the regret bound in Theorem~\ref{theo: regret_bound}. Inspired by the analysis of weighting approach in bandit~\citep{russac2019weighted}, one can attempt to interpret the algorithm as acting optimistically with respect to the weighted parameters of the optimal Q-value defined as 
$\bar{\w}_{t, h}(s, a)= \Sigma_{t, h}^{-1}
    ( \sum_{\tau = 1}^{t-1} \eta^{-\tau} \bphi_{\tau, h} \bphi_{\tau, h}^\top \w^\star_{\tau, h} + \lambda \eta^{-(t-1)} \w^\star_{t, h})$, 
    where 
$\w^\star_{t,h}$ 
are the true parameters of the optimal Q-value. This first attempt was unsuccessful. Then, we came up with the implicitly defined \textit{weighted MDP} and we were able to interpret our algorithm as acting optimistically with respect to this weighted MDP. We provide the full proofs and derivations in the appendix. We first translate the parameter update produced by the algorithm into the following compact update of $Q$-value estimates for any $t \in [K]$ and $h \in \{H, \ldots 1\}$:
\begin{equation}\label{eq: emp_backward}
    Q_{t,h} = \widehat{r}_{t,h} + \widehat{P}_{t, h}V_{t, h+1} + B_{t, h}
\end{equation}
where we define the implicitly empirical reward function $\widehat{r}$ and transition measure $\widehat{P}$ as follows: 
\begin{align*}
    \widehat{r}_{t, h}(s, a) &\triangleq \bphi(s, a)^\top \Sigma_{t, h}^{-1} ( \sum_{\tau=1}^{t-1} \eta^{-\tau} \bphi_{\tau, h} r_{\tau, h} ), \\
    \widehat{P}_{t, h}(\cdot \mid s, a) &\triangleq \bphi(s, a)^\top  \Sigma_{t, h}^{-1} ( \sum_{\tau=1}^{t-1} \eta^{-\tau} \bphi_{\tau, h}
    \delta(\cdot, s_{\tau, h+1})),
\end{align*}
and $B_{t, h}(\cdot , \cdot) = \beta ( \bphi(\cdot, \cdot)^\top \Sigma_{t, h}^{-1} \tlSigma_{t, h} \Sigma_{t, h}^{-1} \bphi(\cdot, \cdot))^{1/2} = \beta \norm{\bphi(\cdot , \cdot )}_{\Sigma_{t, h}^{-1} \tlSigma_{t, h} \Sigma_{t, h}^{-1}}$ is the exploration bonus.

 We can interpret the Equation~\eqref{eq: emp_backward} as an approximation of the backward induction in a \textit{weighted average} MDP defined formally as follows.
\begin{definition}[Weighted Average MDP] let for any $(s, a) \in \calS \times \calA$,
\begin{align*}
    \bar{r}_{t, h}(s, a) &\triangleq \bphi(s, a)^\top \Sigma_{t, h}^{-1}
    ( \sum_{\tau = 1}^{t-1} \eta^{-\tau} \bphi_{\tau, h} \bphi_{\tau, h}^\top \btheta_{\tau, h} + \lambda \eta^{-(t-1)} \btheta_{t, h}), \\
    \bar{P}_{t, h}(\cdot \mid s, a) & \triangleq \bphi(s, a)^\top \Sigma_{t, h}^{-1} 
    ( \sum_{\tau = 1}^{t-1} \eta^{-\tau}  \bphi_{\tau, h} \bphi_{\tau, h}^\top \bmu_{\tau, h}(\cdot ) + \lambda \eta^{-(t-1)} \bmu_{t, h}(\cdot)).
\end{align*}
$(\calS, \calA, \bar{P}, \bar{r})$ is called the weighted average MDP.
\end{definition}
We can see that if we ignore the regularization term (we set $\lambda$ to zero), $\widehat{r}_{t,h}$ coincides with $\bar{r}_{t,h}$ and $\widehat{P}_{t,h}$ is an unbiased estimate of $\bar{P}_{t,h}$. Therefore, in contrast with the stationary case, we are tracking the $Q$-value of the weighted average MDP instead of the true MDP at time $t$. The next Lemma quantifies the bias arising from the time variations of the environment.

\begin{lemma}[Non-stationarity bias] \label{lemma: bias} For any $ W \in [t-1]$ and for any bounded function $ f: \calS \rightarrow \R$ such as $\norm{f}_{\infty} \leq H$, we have:
\begin{align*}
    |r_{t, h}(s, a) - \bar{r}_{t, h}(s, a)|  \leq \texttt{bias}_{r}(t, h)  , \quad
     \Big | [(P_{t, h} -  \bar{P}_{t,h})f](s, a) \Big | \leq H \texttt{bias}_{P}(t, h),
\end{align*}
where
\begin{align*}
\texttt{bias}_{r}(t, h) & = \sqrt{\frac{d}{\lambda (1-\eta)}}
    \sum_{s = t - W}^{t-1}  \| \btheta_{s, h} - \btheta_{s+1, h}\| +
     \frac{2 \sqrt{d} \eta^W}{\lambda(1- \eta)}, \\
    \texttt{bias}_{P}(t, h) & = \sqrt{\frac{d}{\lambda (1-\eta)}} \sum_{s = t - W}^{t-1} \left \| \bmu_{s,h}(\calS) - \bmu_{s+1, h}(\calS) \right\| + \frac{2 \sqrt{d} \eta^W}{\lambda(1- \eta)}.
\end{align*}
\end{lemma}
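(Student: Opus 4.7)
The two inequalities share a common architecture, so I sketch the reward case in detail and indicate the (almost verbatim) modifications for the transition case. The first move is to derive a compact identity for $r_{t,h}-\bar r_{t,h}$. Starting from $r_{t,h}(s,a)=\bphi(s,a)^\top\btheta_{t,h}=\bphi(s,a)^\top\Sigma_{t,h}^{-1}\Sigma_{t,h}\btheta_{t,h}$ and expanding $\Sigma_{t,h}=\sum_{\tau=1}^{t-1}\eta^{-\tau}\bphi_{\tau,h}\bphi_{\tau,h}^\top+\lambda\eta^{-(t-1)}\I$, the regularization term $\lambda\eta^{-(t-1)}\btheta_{t,h}$ will cancel against its counterpart in $\bar r_{t,h}$, leaving the clean identity
\[ r_{t,h}(s,a)-\bar r_{t,h}(s,a)=\bphi(s,a)^\top\Sigma_{t,h}^{-1}\sum_{\tau=1}^{t-1}\eta^{-\tau}\bphi_{\tau,h}\bphi_{\tau,h}^\top(\btheta_{t,h}-\btheta_{\tau,h}). \]

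I then split this sum at $\tau=t-W$ into an \emph{old} piece ($\tau<t-W$) and a \emph{recent} piece ($\tau\ge t-W$). For the old piece I will combine $\|\Sigma_{t,h}^{-1}\|_{\mathrm{op}}\le\eta^{t-1}/\lambda$ (from $\Sigma_{t,h}\succeq\lambda\eta^{-(t-1)}\I$), $\|\bphi_{\tau,h}\|\le 1$, the crude bound $\|\btheta_{t,h}-\btheta_{\tau,h}\|\le 2\sqrt d$ supplied by Assumption~\ref{assumption:non_stat_linear}, and the geometric-series estimate $\sum_{\tau=1}^{t-W-1}\eta^{-\tau}\le\eta^{-(t-W-1)}/(1-\eta)$. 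Multiplying these four factors gives exactly the tail term $\frac{2\sqrt d\,\eta^W}{\lambda(1-\eta)}$.

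For the recent piece I will telescope $\btheta_{t,h}-\btheta_{\tau,h}=\sum_{s=\tau}^{t-1}(\btheta_{s+1,h}-\btheta_{s,h})$, interchange the order of summation over $(\tau,s)$, and collect the $\tau$-indices into partial Gram matrices $B_s\triangleq\sum_{\tau=t-W}^{s}\eta^{-\tau}\bphi_{\tau,h}\bphi_{\tau,h}^\top$. The key structural fact is $0\preceq B_s\preceq\Sigma_{t,h}$, which combined with the identity $\Sigma_{t,h}^{-1}M_{t,h}=\I-\lambda\eta^{-(t-1)}\Sigma_{t,h}^{-1}$ (for the unregularized Gram $M_{t,h}$) should control each summand $\bphi^\top\Sigma_{t,h}^{-1}B_s(\btheta_{s+1,h}-\btheta_{s,h})$ by $\|\btheta_{s+1,h}-\btheta_{s,h}\|$; summing over $s$ then delivers the first term of $\mathtt{bias}_{\r}$. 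The main technical hurdle is precisely this last matrix inequality: because $\Sigma_{t,h}^{-1}B_s$ is not symmetric (even though its eigenvalues lie in $[0,1]$), naive operator-norm arguments yield only the looser bound of order $\sqrt{W/\lambda}\,\|v\|$, so extracting the clean coefficient $1$ requires careful use of both the regularized structure $\Sigma_{t,h}=M_{t,h}+\lambda\eta^{-(t-1)}\I$ and the per-step inequality $\|\bphi_{\tau,h}\|_{\Sigma_{t,h}^{-1}}^2\le\eta^{\tau}$ that follows from $\Sigma_{t,h}\succeq\eta^{-\tau}\bphi_{\tau,h}\bphi_{\tau,h}^\top$.

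The transition bound will follow by running the exact same scheme coordinate-wise on the vector $\bmu_{t,h}(f)=\int f(s')\,d\bmu_{t,h}(s')\in\R^d$ in place of $\btheta_{\tau,h}$. With $\|f\|_\infty\le H$ and Assumption~\ref{assumption:non_stat_linear} one has $\|\bmu_{t,h}(f)\|\le\sqrt d\,H$ and $\|\bmu_{t,h}(f)-\bmu_{\tau,h}(f)\|\le H\|\bmu_{t,h}(\calS)-\bmu_{\tau,h}(\calS)\|$, which explains the extra factor $H$ in both pieces of $\mathtt{bias}_{\P}$ while preserving the rest of the argument verbatim.
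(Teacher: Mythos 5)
Your proposal follows the paper's proof essentially step for step: the cancellation of the regularization term to get the compact identity, the split of the sum at $\tau=t-W$, the geometric-tail bound $\frac{2\sqrt d\,\eta^{W}}{\lambda(1-\eta)}$ for the old terms (via $\norm{\Sigma_{t,h}^{-1}}\le\eta^{t-1}/\lambda$, $\norm{\bphi_{\tau,h}}\le 1$ and $\norm{\btheta_{t,h}-\btheta_{\tau,h}}\le 2\sqrt d$), and the telescoping plus summation swap that produces the partial Gram matrices $B_s=\sum_{\tau=t-W}^{s}\eta^{-\tau}\bphi_{\tau,h}\bphi_{\tau,h}^\top$ for the recent terms; the transition case is handled in the paper exactly as you describe, componentwise on $\int f\,d\bmu$.

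However, the step you yourself flag as the main technical hurdle is left unresolved, and it is a genuine gap: you need the operator-norm bound $\norm{\Sigma_{t,h}^{-1}B_s v}\le\norm{v}$ for the non-symmetric product $\Sigma_{t,h}^{-1}B_s$, and none of the ingredients you list (the decomposition $\Sigma_{t,h}=M_{t,h}+\lambda\eta^{-(t-1)}\I$, or $\bphi_{\tau,h}^\top\Sigma_{t,h}^{-1}\bphi_{\tau,h}\le\eta^{\tau}$) delivers it. For comparison, the paper closes this step by computing $\lambda_{\max}\bigl(\Sigma_{t,h}^{-1}B_s\bigr)\le 1$ from the Rayleigh quotient $x^\top\Sigma_{t,h}^{-1}B_s x/\norm{x}^2$; but for a non-symmetric matrix the Rayleigh quotient controls the numerical range (and the eigenvalues, since $\Sigma^{-1}B$ is similar to the symmetric $\Sigma^{-1/2}B\Sigma^{-1/2}\preceq\I$), not the operator norm, so your worry is well founded. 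Indeed $0\preceq B\preceq\Sigma$ alone does not imply $\norm{\Sigma^{-1}B}\le 1$: with $\Sigma=\mathrm{diag}(1,\epsilon)$, $u=(1,1)^\top/\sqrt2$ and $B=\frac{2\epsilon}{1+\epsilon}uu^\top$ one has $B\preceq\Sigma$ yet $\norm{\Sigma^{-1}B}=\sqrt2\,\sqrt{1+\epsilon^2}/(1+\epsilon)\to\sqrt2$ as $\epsilon\to 0$; in general one only gets $\norm{\Sigma^{-1}B}\le\sqrt{\lambda_{\max}(\Sigma)/\lambda_{\min}(\Sigma)}$, and the condition number of $\Sigma_{t,h}$ is not an absolute constant here. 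This is precisely the known subtlety in the non-stationary linear-bandit analyses from which this argument is imported, and it has required published corrections there. To finish the lemma you must either prove the operator-norm bound using the specific structure of $B_s$ as a sub-sum of the rank-one terms of $\Sigma_{t,h}$, or accept a weaker constant (e.g.\ a dimension- or condition-number-dependent factor) multiplying $\sum_{s=t-W}^{t-1}\norm{\btheta_{s,h}-\btheta_{s+1,h}}$; as written, neither your sketch nor the paper's own display establishes the coefficient $1$.
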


We analyse now the one-step error decomposition of the difference between the estimates $Q_{t,h}$ and $Q^\pi_{t,h}$ of a given policy $\pi$. To do that, we use the weighted MDP $(\calS, \calA, \bar{P}, \bar{r})$ to isolate the bias term. The decomposition contains four parts: the reward bias and variance, the transition bias and variance, and the difference in value functions at step $h+1$. It can be written as:
\begin{align*}
    & \bphi(s, a)^\top \w_{t, h} - Q^\pi_{t,h}(s, a) =  \underbrace{(\bar{r}_{t,h} - r_{t,h})(s, a)}_{\text{reward bias}} +  \underbrace{(\widehat{r}_{t,h} - \bar{r}_{t,h})(s, a)}_{\text{reward variance}} + \\
& \quad \underbrace{[(\bar{P}_{t,h} - P_{t,h})V^\pi_{t, h+1}](s, a)}_{\text{transition bias}} +  \underbrace{[(\widehat{P}_{t,h} - \bar{P}_{t,h})V_{t,h}](s, a)}_{\text{transition variance}} + \underbrace{[\bar{P}_{t, h}(V_{t, h+1} - V^\pi_{t, h+1})](s, a)}_{\text{difference in value functions of next step}}.
\end{align*}
This differs from the error decomposition in the analysis of \textsc{LSVI-UCB} in several aspects: firstly, the variance terms are with respect the newly defined weighted MDP and not the true MPD. Secondly, we have additional reward and transition bias terms. Finally, the difference in the difference in value-functions at step $h+1$ hides also another bias term. Therefore, we need to carefully propagate bias terms through iteration. 

The reward and transition bias terms are controlled by Lemma~\ref{lemma: bias} using the fact that $\| V^\pi_{t,h}\|_\infty \leq H$. The difference in value-functions at step $h+1$ can be rewritten as $ [P_{t,h}(V_{t,h+1} - V^\pi_{t, h+1})](s, a) + [(\bar{P}_{t,h} - P_{t, h})(V_{t,h+1} - V^\pi_{t, h+1})](s, a)$. We control the second term by applying again Lemma~\ref{lemma: bias} since $\| V_{t,h+1} - V^\pi_{t, h+1}\|_\infty \leq H$.

It remains now the two variance terms. The reward variance is easy to control and it reduces simply to the bias due to the regularization as we assume that $r$ is a deterministic function. Note that the assumption of deterministic reward is not a limiting assumption since the contribution of a stochastic reward in the final regret has lower order term than the contribution of a stochastic transition. 
Controlling the transition variance is more involved. Basically, we would like use the concentration of weighted self-normalized processes~\citep{russac2019weighted} to get a high probability bound. However, as $V_{t, h+1}$ is estimated from past transitions and thus depends on the latter in a non-trivial way, we show a concentration bound that holds uniformly for all possible value functions generated by the algorithm. This done by using a union bound argument over an $\epsilon$-net of the set of possible value functions with an appropriate value of $\epsilon$. 

\begin{lemma}
For any $\delta \in (0, 1)$, with probability at least $1 - \delta/2$, we have for all $(t, h) \in [K] \times [H]$,
\begin{align*}
\norm{\sum_{\tau=1}^{t-1}  \eta^{-\tau}\bphi_{\tau, h} \epsilon_{\tau, h}}_{\tlSigma_{t,h}^{-1}} \leq C d H  \sqrt{ \log \left(\frac{d H \beta}{\lambda (1 - \eta)}  \cdot \frac{2}{\delta}\right)}
\end{align*}
where $C >0$ is an absolute constant. 
\end{lemma}
Let $\texttt{bias} \triangleq \texttt{bias}_{r} + \texttt{bias}_{P}$ the total non-stationarity bias of the MDP. By deriving an appropriate value of $\beta$ (see Lemma~\ref{lemma: key lemma}) and an induction arguments, we establish the optimism of our value estimates.
\begin{lemma}[Optimism]
There exists an absolute value $c$ such that $\beta = c d H \sqrt{\imath}$ where $\imath = \log \left( \frac{2 d H}{(1-\eta) \delta} \right)$, $\lambda =1$ and for all $(s, a, t, h) \in \calS \times \calA \times [K] \times [H]$, we have with probability at least $1 - \delta / 2$
\begin{align}
    Q_{t,h}(s, a) + 2 H \sum_{h' =h}^H \texttt{bias}(t, h) \geq Q_{t, h}^\star(s, a)
\end{align}
\end{lemma}

\section{Related Work}
\paragraph{RL with linear function approximation: } Provable algorithms with linear function approximation have seen a growing research interest in the recent literature. Under the assumption of stationary linear MDP,~\citet{jin2020provably} propose an optimistic version of LSVI (\textsc{lsvi-ucb}) that achieves a regret of $\widetilde{\calO} (d^{3 /2} H^2 K^{1 / 2})$ where the exploration is induced by adding a UCB bonus to the estimate of the action-value function. Whereas ~\citet{zanette2020frequentist} introduce a randomized version of LVSI that achieves  $\widetilde{\calO} (d^{2} H^2 K^{1 / 2})$ regret where the exploration is induced by perturbing the estimate of the action-value function. Lately,~\citet{zanette2020learning} consider a more general assumption, zero inherent Bellman error, which states that the space of linear functions is close with respect to the Bellman operator (Note that linear MDPs have zero inherent Bellman error). Instead of adding UCB bonuses for every experienced states at each step $h \in [H]$, they propose to solve a global planning optimization program that returns an optimistic solution at the initial state, achieving $\widetilde{\calO} (d H^2 K^{1 / 2})$ regret.~\citet{yang2019reinforcement} study a slightly different assumption where the transition kernel admits a three-factor low-rank factorization $P(\cdot \mid \cdot) = \bphi(\cdot)^\top M^\star \bpsi(\cdot)$. They propose a model-based algorithm that tries to learn the \textit{core matrix} $M^\star$ and they show that it achieves $\widetilde{\calO} (d H^2 K^{1 / 2})$ regret.

Linear function approximations have also been studied in adversarial settings, where the reward function is allowed to change between episodes in an adversarial manner but the transition kernel stays the same. In the full-information setting,~\citet{cai2019provably} propose an optimistic policy optimization algorithm that achieves $\widetilde{\calO} (d H^2 K^{1 / 2})$ providing that the transition kernel has a linear structure $P_h(s' \mid s, a) = \bpsi(s, a, s')^\top \theta_h$. In the bandit feedback setting,~\citet{neu2020online} propose a new algorithm based on adversarial linear bandit that achieves $\widetilde{\calO} ( (d |\calA|)^{1/3} H^2 K^{2 / 3})$ regret under the assumption that all action-value functions can be represented as linear functions.

Concurrently to our work,~\cite{zhou2020nonstationary} also study non-stationary linear MDPs. They establish a lower bound of $\Omega( d^{2/3} H^2  \Delta^{1/3} K^{2/3})$ and they propose a restart strategy that achieves the same dynamic regret as in our Corollary~\ref{coro: optimized_regret}. Their algorithm consists in restarting periodically \textsc{lsvi-ucb}, and is thus markedly different from our approach. By throwing away historical data from time to time such a restart strategy would be best suited for abrupt changes in the environment, whereas our approach, by smoothly forgetting the past, would be more beneficial for gradually changing environments. Empirical comparison of both strategies in the bandit setting~\citep{zhao2020simple} confirm this.

\textbf{Non-stationary RL:} Provably efficient algorithms for non-stationary RL in the tabular case have been introduced in several recent works. While~\citet{gajane2018sliding} and~\citet{cheung2020reinforcement} use a sliding-window approach, ~\citet{ortner2019variational} implement a restart strategy where at each restart, past observations are discarded and new estimators for the reward and the transition kernel are built from scratch. Very recently,~\citet{domingues2020kernel} tackle non-stationary RL in  continuous environments, where rewards and transition kernel are assumed to be Lipschitz with respect to some similarity metric over the state-action space. They propose a kernel-based algorithm with regret guarantee using time and space dependant smoothing kernels.

\section{Recent Developments}
The authors of~\citet{cheung2019learning} who pioneered the non-stationary linear bandit, released recently a revised version of their AISTATS 2019 paper to acknowledge the mistake in the analysis. In order for their optimal rate $\tilde{\mathcal{O}}(T^{2/3})$ to hold, they assume that actions are orthogonal.

~\cite{zhao2021non} identify also the mistake and proposed the same fix than ours in a technical note released slightly after we had made public on arxiv a version of our paper containing the fix.
While the fix strategies are found independently, we would like to credit to Peng Zhao for detecting this technical gap in the first place. 

While our work shows that forgetting strategies achieve the rate of $\tilde{\mathcal{O}}(T^{3/4})$, the recent work~\citet{Wei2021NonstationaryRL} follows a substantially different approach to achieve the optimal rate $\tilde{\mathcal{O}}(T^{2/3})$ for the first time in the setting of non-stationary linear bandit and MDP. Their algorithm detects non-stationarity by running multiple instances of a base (stationary) algorithm with different durations in a randomized schedule.

\section{Conclusion}
In this paper, we studied the problem of RL with linear function approximation in a changing environment where the reward and the transition kernel can change from time to time as long as the total changes are bounded by some variation budget. We introduced a provably efficient algorithm in this setting. The algorithm uses a discount factor to reduce the influence of the past and estimates the $Q$-value's parameters through weighted LSVI. We revisited as well the linear bandit setting. We pointed out a serious technical problem in the analysis of all forgetting strategies. Then, we provide a new regret analysis of these algorithms.

\textbf{Limitations:} In order to obtain theoretical guarantees, we need to make some assumptions such as Linear MDPs.  Assumptions weaker than linear MDP either result in computationally inefficient algorithms (as in~\citet{zanette2020learning}) or require the transition to be deterministic~\citep{du2020agnostic}.
Furthermore, our $\widetilde{\mathcal{O}}(T^{3/4})$ regrets for both bandits and MDPs don't match the $\Omega(T^{2/3})$ lower bounds for these problems. Forgetting strategies have been mistakenly believed optimal in linear bandits. In contrast, our work shows that the latter is not true and leaves the question of minimax rate open again. It is an interesting direction to explore in future work.
\bibliographystyle{apalike}
\bibliography{lib}
\newpage
\appendix

\section{Technical Gaps in Published Bandit Papers}
In this section,
we highlight the technical error made by~\citep{cheung2019learning} when controlling the bias term due to the non-stationarity of the reward function. Let us first recall the non-stationary linear bandit model
\begin{definition}[Non-stationary linear bandit] At iteration $t$, the player makes a decision $A_t$ from a feasible set $\calA \subset \R^d$, then observes the reward $r_t$ satisfying:
\begin{equation}
    r_t = A_t^\top \btheta_t + z_t
\end{equation}
where $\btheta_t$ is the unknown regression parameter at iteration $t$ and $z_t$ is conditionally $\sigma$-subgaussian noise. We assume further that $\| A\| \leq 1, \forall A \in \calA$ and $\| \btheta_t\| \leq S, \forall t$.
\end{definition}

\citet{cheung2019learning} propose the \textsc{sw-ucb} algorithm based on a sliding window approach of size $W$. At time $t$, actions are selected as follows: 
\begin{equation}
    A_t = \arg \max_{a \in \calA} a^\top \hat{\btheta}_t + \beta \norm{a}_{V_t^{-1}}
\end{equation}
where $\hat{\btheta}_t$ is the solution of the sliding window least squares problem:
\begin{equation}
    \hat{\btheta}_t = V_t^{-1} \sum_{\tau=\max\{1, t-W \}}^{t-1} A_\tau r_\tau \quad \text{, where} \quad V_t = \sum_{\tau=\max\{1, t-W \}}^{t-1} A_\tau A_\tau^\top + \lambda \cdot \I \text{ is the Gram matrix.}
\end{equation}

In the proof of lemma 1 in~\citet{cheung2019learning}, the authors consider matrix $M =  V^{-1}_t X$ where $X = \sum_{\tau=t-W}^{p} A_\tau A_\tau^\top$ for any integer $p \in \{t-W, \ldots, t-1\}$. They attempt to show that $M$ is positive semi-definite (PSD) (i.e $y^\top M y \geq 0, \forall y \in \R^d$) as follows: they first prove that $M$ shares the same  characteristic polynomial as the matrix $V^{-1/2}_t X V^{-1/2}$, then assert that since $V^{-1/2}_t X V^{-1/2}$ is PSD, $M$ is PSD as well. 

Unfortunately, this last assertion does not hold in general. As a counterexample, let us consider the 2 dimensional identity matrix $\I$ and $B = ((1,0)^\top, (-10, 1)^\top)$. $\I$ and $B$ share the same characteristic polynomial $p(x) = (x-1)^2$, $\I$ is obviously PSD but $B$ is not, as for $y = (1, 1)^\top$, we have $y^\top B y = -8 < 0$. 

Moreover, in general, a matrix of the form $M = V^{-1}_t X$ is not guaranteed to be PSD. If one sets $d=2$, $t=3$, $\lambda=1$, $A_1 = (1,0)^\top$ and $A_2 = (1, 1)^\top$. with $A_1 = (1,0)^\top$ and $A_2 = (1, 1)^\top$, we have $M = V^{-1}_t A_1A_1^\top = ((0.4, - 0.2)^\top, (0, 0)^\top)$. If we consider $y = (1,5)^\top$, we have $y^\top M y = -0.6 < 0$.

\section{Regret Reanalysis of \textsc{d-linucb}} 
\label{sec: d-linucb}
\citet{russac2019weighted} propose the \textsc{d-linucb} algorithm, based on sequential weighted least squares regression. At time $t$, actions are selected as follows: 
\begin{equation}
    x_t = \arg \max_{x \in \calX_t} x^\top \hat{\btheta}_t + \beta \norm{x}_{V_t^{-1} \tilde{V}_t V_t^{-1} }
\end{equation}
where $V_t = \sum_{\tau=1}^{t-1} \eta^{-\tau} x_\tau x_\tau^\top + \lambda \eta^{-(t-1)} \cdot \I$ is the Gram matrix, $\tilde{V}_t = \sum_{\tau=1}^{t-1} \eta^{-2 \tau} x_\tau x_\tau^\top + \lambda \eta^{-2 (t-1)} \cdot \I$ and $\hat{\btheta}_t$ is the solution the weighted least squares problem:
\begin{equation}
    \hat{\btheta}_t = V_t^{-1} \sum_{\tau=1}^{t-1} \eta^{-\tau} x_\tau r_\tau.
\end{equation}

As our analysis follows the same proof steps as in~\citet{russac2019weighted}, we will only highlight our proposed fix to their technical error and the changes that it induces.
\paragraph{Non-stationarity bias}
Let $\bar{\btheta}_{t} \triangleq V^{-1}_{t} \sum_{\tau=1}^{t-1} \eta^{-\tau} x_\tau x_\tau^\top \btheta_{\tau} + \lambda \eta^{-(t-1)} \btheta_t$ the weighted average of the true regression parameters. To characterize the     bias,~\citet{russac2019weighted} attempt to control directly $\|\btheta_t - \bar{\btheta}_t \|$. Instead, we propose to control $| x^{\top} (\btheta_t - \bar{\btheta}_t) |$ for any $x \in \calX$ and then use the fact that $ \norm{\btheta_t - \bar{\btheta}_t} = \max_{x: \norm{x} = 1} | x^{\top} (\btheta_t - \bar{\btheta}_t) |$
\begin{align*}
    | x^{\top} (\btheta_t - \bar{\btheta}_t) |& = \left |x^{\top} V_t^{-1} \sum_{\tau=1}^{t-1} \eta^{-\tau}  x_\tau x_\tau^\top (\btheta_{\tau} - \btheta_t) \right | \\
    & \leq \underbrace{\left | x^{\top} V_t^{-1} \sum_{\tau=t-W}^{t-1} \eta^{-\tau}  x_\tau x_\tau^\top (\btheta_{\tau} - \btheta_t) \right |}_{(\star)} + \underbrace{
    \left | x^{\top} V_t^{-1} \sum_{\tau=1}^{t-W-1} \eta^{-\tau}  x_\tau x_\tau^\top (\btheta_{\tau} - \btheta_t)\right | }_{(\star \star)} 
\end{align*}
\paragraph{Bound on $(\star)$:} 
\begin{align*}
    & \left | x^{\top} V_t^{-1} \sum_{\tau=t-W}^{t-1} \eta^{-\tau}  x_\tau x_\tau^\top (\btheta_{\tau} - \btheta_t) \right | \\
    & \leq   \sum_{\tau=t-W}^{t-1} \eta^{-\tau} \left |   x^{\top} V_t^{-1} x_\tau \right | \cdot | x_\tau^\top (\btheta_{\tau} - \btheta_t) | \tag{triangle inequality } \\
     & = \sum_{\tau=t-W}^{t-1} \eta^{-\tau}  |   x^{\top} V_t^{-1} x_\tau | \cdot | x_\tau^\top (\sum_{s = \tau}^{t-1} (\btheta_{s} - \btheta_{s+1})) | \\
     & \leq \sum_{\tau=t-W}^{t-1} \eta^{-\tau} |  x^{\top} V_t^{-1} x_\tau | \cdot \| x_\tau\| \cdot \| \sum_{s = \tau}^{t-1} (\btheta_{s} - \btheta_{s+1})\| \tag{Cauchy-Schwarz}
     \\
     & \leq \sum_{\tau=t-W}^{t-1}  \eta^{-\tau}  | x^{\top} V_t^{-1} x_\tau | \cdot   \sum_{s = \tau}^{t-1} \| \btheta_{s} - \btheta_{s+1}\| \tag{$\| x_\tau\| \leq 1$} \\
     & \leq \sum_{s = t - W}^{t-1} \sum_{\tau = t-W}^{s} \eta^{-\tau}  | x^{\top} V_t^{-1} x_\tau | \cdot \| \btheta_{s} - \btheta_{s+1}\|
     \tag{$\sum_{\tau = t - W}^{t-1} \sum_{s=\tau}^{t-1} =  \sum_{s = t - W}^{t-1} \sum_{\tau = t-W}^{s} $} \\
     & \leq \sum_{s = t - W}^{t-1} \sqrt{ \bigg[ \sum_{\tau = t-W}^{s} \eta^{-\tau} x^\top V_{t}^{-1} x \bigg]  \cdot \biggl [ \sum_{\tau = t-W}^{s}  \eta^{-\tau} x_{\tau}^\top V_{t}^{-1} x_{\tau}\bigg] }
     \cdot \norm{ \btheta_{s} - \btheta_{s+1}}
     \tag{Cauchy-Schwarz}
     \\ & \leq \sum_{s = t - W}^{t-1} \sqrt{ \bigg[ \sum_{\tau = t-W}^{s} \eta^{-\tau} x^\top V_{t}^{-1} x \bigg] \cdot d }
     \cdot \norm{ \btheta_{s} - \btheta_{s+1}}
     \tag{by lemma~\ref{lem:basic_ineq}} \\
     & \leq \norm{x} \sqrt{d} \sum_{s = t - W}^{t-1} \sqrt{ \frac{\sum_{\tau = t-W}^{t-1} \eta^{-\tau}}{ \lambda \eta^{-(t-1)}}} \cdot \norm{ \btheta_{s} - \btheta_{s+1}}  \tag{$\lambda_{\max}(V_t^{-1}) \leq \frac{1}{\lambda \eta^{-(t-1)}}$} \\
     & \leq \norm{x} \sqrt{\frac{d}{\lambda (1-\eta)}} \sum_{s = t - W}^{t-1} \norm{ \btheta_{s} - \btheta_{s+1}} 
\end{align*}

\paragraph{Bound on $(\star \star)$:}
\begin{align*}
    \left | x^{\top} V_t^{-1} \sum_{\tau=1}^{t-W-1} \eta^{-\tau}  x_\tau x_\tau^\top (\btheta_{\tau} - \btheta_t)\right | & \leq \norm{x} \norm{V_t^{-1} \sum_{\tau=1}^{t-W-1} \eta^{-\tau}  x_\tau x_\tau^\top (\btheta_{\tau} - \btheta_t)} \\
    & \leq \norm{x} \frac{1}{\lambda \eta^{-(t-1)}} \norm{ \sum_{\tau=1}^{t-W-1} \eta^{-\tau}  x_\tau x_\tau^\top (\btheta_{\tau} - \btheta_t)}
    \tag{$\norm{V_t^{-1}} = \lambda_{\max}(V_t^{-1}) \leq \frac{1}{\lambda \eta^{-(t-1)}}$} \\
    & \leq \norm{x} \frac{1}{\lambda} \sum_{\tau=1}^{t-W-1} \eta^{(t-1-\tau)} \norm{x_\tau}^2 \norm{\btheta_{\tau} - \btheta_t} \\
    & \leq \norm{x} \frac{2 S}{\lambda}\frac{ \eta^W}{1-\eta} \tag{ $\norm{\btheta_t} \leq S$ and $\norm{x_t} \leq 1$} 
\end{align*}

We conclude for any $x \in \R^d$
\begin{equation*}
    | x^{\top} (\btheta_t - \bar{\btheta}_t) | \leq \norm{x} \left (\sqrt{\frac{d}{\lambda (1-\eta)}} \sum_{s = t - W}^{t-1} \norm{ \btheta_{s} - \btheta_{s+1}} + \frac{2 S}{\lambda}\frac{ \eta^W}{1-\eta} \right)
\end{equation*}
which proves that
\begin{equation*}
    \norm{\btheta_t - \bar{\btheta}_t} \leq 
    \sqrt{\frac{d}{\lambda (1-\eta)}} \sum_{s = t - W}^{t-1} \norm{ \btheta_{s} - \btheta_{s+1}} + \frac{2 S}{\lambda}\frac{ \eta^W}{1-\eta}
\end{equation*}
Comparing to the bound on $\norm{\btheta_t - \bar{\btheta}_t}$ in the proof of~\citet{russac2019weighted}, there is an extra factor $\sqrt{\frac{d}{\lambda (1-\eta)}}$ that multiplies the local non-stationarity term $\sum_{s = t - W}^{t-1} \norm{ \btheta_{s} - \btheta_{s+1}}$. This extra factor will consequently multiply the variation budget term in the final regret as stated in the following proposition:

\begin{proposition}~\label{prop: D-LinUCB regret} Under the assumption that $\sum_{t=1}^{K-1} \norm{\btheta_{t} - \btheta_{t+1}} \leq \Delta$, for any $\delta \in (0, 1)$, if we set $\beta = \sqrt{\lambda} S + \sigma \sqrt{2 \log(1/\delta) + d \log(1 + \frac{1}{\lambda d (1-\gamma)})}$ in the algorithm 1 \textsc{D-LinUCb} of~\citet{russac2019weighted}, then with probability $1-\delta$, for any $W >0$ the dynamic regret of \textsc{D-LinUCb} is at most 
\begin{align*}
\mathcal{O}\left(  \sqrt{\frac{d}{\lambda (1-\eta)}} \Delta W + \frac{S}{\lambda} \frac{\eta^W}{1-\eta}K + \beta \sqrt{d K} \sqrt{K \log(1/\eta) + \log(1 + \frac{1}{d\lambda (1-\eta)})} \right)
\end{align*}
\end{proposition}

\begin{proposition} Under the same assumption as~\ref{prop: D-LinUCB regret}
If we set $\log(1 / \eta) = d^{-1/4} \Delta^{1/2} K^{-1/2}$, $W = \frac{\log \left( K / (1-\eta) \right)}{\log(1/\eta)}$ and $\lambda =1$; for any $\delta \in (0, 1)$; we have that with probability $1-\delta$, the dynamic regret of \textsc{D-linUCB} is at most $\widetilde{\mathcal{O}}(d^{7/8} \Delta^{1/4} K^{3/4})$.
\end{proposition}
\begin{proof}
With the choice $\log(1 / \eta) = d^{-1/4} \Delta^{1/2} K^{-1/2}$ and $W = \frac{\log \left( K / (1-\eta) \right)}{\log(1/\eta)}$; we have  $\frac{\eta^W}{1-\eta} K = 1$, $\eta = \exp(- \left( \frac{\Delta}{ K} \right)^{1/2}) \substack{\sim \\ K \rightarrow \infty} 1 - d^{-1/4} \Delta^{1/2} K^{-1/2}$  so that $\sqrt{\frac{d}{\lambda (1-\eta)}} \Delta W \sim \sqrt{d} \Delta \log(K / (1-\eta)) \left ( d^{1/4} \Delta^{-1/2} K^{1/2} \right)^{3/2} = \tilde{\calO}( d^{7/8} \Delta^{1/4} K^{3/4})$ and 
$ \beta \sqrt{dK} \sqrt{K \log(1/\eta) + \log(1 + \frac{1}{d\lambda (1-\eta)})}  = \tilde{\calO} (d K \left (d^{-1/4} \Delta^{1/2} K^{-1/2} \right )^{1/2} )= \tilde{\calO}( d^{7/8} \Delta^{1/4} K^{3/4})$.
\end{proof}

\section{Regret Analysis of \textsc{opt-wlsvi} and Proof Outline}

\subsection{Single Step Error Decomposition}
In this section, we analyse the one-step error decomposition of the difference between the estimates $Q_{t,h}$ and $Q^\pi_{t,h}$ of a given policy $\pi$. To do that, we use the weighted MDP $(\calS, \calA, \bar{P}, \bar{r})$ to isolate the bias term. The decomposition contains four parts: the reward bias and variance, the transition bias and variance, and the difference in value functions at step $h+1$. It can be written as:
\begin{align*}
    \bphi(s, a)^\top \w_{t, h} - Q^\pi_{t,h}(s, a) =
    & \underbrace{(\bar{r}_{t,h} - r_{t,h})(s, a)}_{\text{reward bias}} +  \underbrace{(\widehat{r}_{t,h} - \bar{r}_{t,h})(s, a)}_{\text{reward variance}} \\
&~+ \underbrace{[(\bar{P}_{t,h} - P_{t,h})V^\pi_{t, h+1}](s, a)}_{\text{transition bias}} +  \underbrace{[(\widehat{P}_{t,h} - \bar{P}_{t,h})V_{t,h}](s, a)}_{\text{transition variance}} \\
&~+\underbrace{[\bar{P}_{t, h}(V_{t, h+1} - V^\pi_{t, h+1})](s, a)}_{\text{difference in value functions of next step}}.
\end{align*}
The reward and transition bias terms are controlled by Lemma~\ref{lemma: bias} using the fact that $\| V^\pi_{t,h}\|_\infty \leq H$. The difference in value-functions at step $h+1$ can be rewritten as $ [P_{t,h}(V_{t,h+1} - V^\pi_{t, h+1})](s, a) + [(\bar{P}_{t,h} - P_{t, h})(V_{t,h+1} - V^\pi_{t, h+1})](s, a)$. We control the second term by applying again Lemma~\ref{lemma: bias} since $\| V_{t,h+1} - V^\pi_{t, h+1}\|_\infty \leq H$.

It remains now the two variance terms. The reward variance is easy to control and it reduces simply to the bias due to the regularization as we assume that $r$ is a deterministic function. Note that the assumption of deterministic reward is not a limiting assumption since the contribution of a stochastic reward in the final regret has lower order term than the contribution of a stochastic transition. We have, using the Cauchy-Shwartz inequality and $\norm{\tlSigma_{t,h}^{-1}} \leq \frac{1}{\lambda \eta^{-2(t-1)}}$: 

\begin{align*}
    |(\widehat{r}_{t,h} - \bar{r}_{t,h})(s, a)| &= \lambda \eta^{- (t-1)} | \bphi(s, a)^\top \Sigma_{t,h}^{-1} \btheta_{t, h} | \\ 
    & \leq \sqrt{d \lambda} \norm{\bphi(s, a)}_{\Sigma_{t,h}^{-1} \tlSigma_{t,h} \Sigma_{t,h}^{-1}}.
\end{align*}
Controlloing the transition variance is more involved, and we differ the analysis to the next section.
If we define $\texttt{bias} \triangleq \texttt{bias}_{r} + \texttt{bias}_{P}$ the total non-stationarity bias of the MDP, we can summarize the one-step analysis as follows: 
\begin{align} \label{eq: single-step decomp}
     \bphi(s, a)^\top \w_{t, h} - Q^\pi_{t,h}(s, a)  \leq 
     & 2 H \texttt{bias}(t, h) + 
     [P_{t, h}(V_{t, h+1} - V^\pi_{t, h+1})](s, a) \\
    &~+\sqrt{d \lambda} \norm{\bphi(s, a)}_{\Sigma_{t,h}^{-1} \tlSigma_{t,h} \Sigma_{t,h}^{-1}} +  [(\widehat{P}_{t,h} - \bar{P}_{t,h})V_{t,h}](s, a) \nonumber
\end{align}

\subsection{High Probability Bound on the Transition Variance}
In this section, we will establish a high probability bound on the term $(\widehat{P}_{t,h} - \bar{P}_{t,h})V_{t,h}$. From the definitions of $\widehat{P}$ and $\bar{P}$ and the Cauchy-Schwartz inequality, we have
\begin{align*}
& [(\widehat{P}_{t,h} - \bar{P}_{t,h})V_{t,h}](s, a) \leq
\left( \norm{\sum_{\tau=1}^{t-1}  \eta^{-\tau}\bphi_{\tau, h} \epsilon_{\tau, h}}_{\tlSigma_{t,h}^{-1}} + H \sqrt{d \lambda } \right )  \norm{\bphi(s, a)}_{\Sigma_{t,h}^{-1} \tlSigma_{t,h} \Sigma_{t,h}^{-1}},
\end{align*}
where $\epsilon_{\tau, h} = V_{t, h+1}(s_{\tau, h+1}) - [P_{t,h}V_{t, h+1}](s_{\tau, h}, a_{\tau, h})$. If $V_{t, h+1}$ was a fixed function, $\epsilon_{\tau, h}$ would be zero-mean conditioned on the history of transitions up to step $h$ at episode $\tau$ and we would use the concentration of weighted self-normalized processes~\citep{russac2019weighted} to get a high probability bound on $\norm{\sum_{\tau=1}^{t-1}  \eta^{-\tau}\bphi_{\tau, h} \epsilon_{\tau, h}}_{\tlSigma_{t,h}^{-1}}$. However, as $V_{t, h+1}$ is estimated from past transitions and thus depends on the latter in a non-trivial way, we will show a concentration bound that holds uniformly for all possible value functions generated by the algorithm. 
We proceed first by establishing the boundness of iterates in the next Lemma.

\begin{lemma}[Boundness of iterates] \label{lem: iterates boundness}
For any $(t, h) \in [K] \times [H]$, the weight $\w_{t,h}$ and the matrix $\Sigma_t^{-1} \tlSigma_t \Sigma_t^{-1}$ in Algorithm \ref{algo:OPT-WLSVI} satisfies:
\begin{equation*}
\norm{\w_{t,h}} \leq 2 H \sqrt{  \frac{d (1 - \eta^{t-1})}{\lambda (1-\eta)}} \text{ and } \norm{\Sigma_t^{-1} \tlSigma_t \Sigma_t^{-1}} \leq \frac{1}{\lambda}
\end{equation*}
\end{lemma}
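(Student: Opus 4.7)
The plan is to prove the two inequalities separately; both amount to careful bookkeeping of the geometric factors $\eta^{-\tau}$ in the weighted Gram matrices, combined with the ordering $\sum_{\tau=1}^{t-1}\eta^{-\tau}\bphi_{\tau,h}\bphi_{\tau,h}^\top \preceq \Sigma_{t,h}$ (which is immediate from the definition of $\Sigma_{t,h}$) and the operator-norm bound $\Sigma_{t,h}^{-1} \preceq \frac{\eta^{t-1}}{\lambda}\I$.

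For the weight bound I would start from the closed form \eqref{eq: wls_estimator} and apply the triangle inequality together with the uniform bound $|r_{\tau,h} + V_{t,h+1}(s_{\tau,h+1})| \leq 2H$ (which follows from the clipping $V_{t,h+1}(\cdot) \leq H$ and the boundedness of the reward), to get
\begin{equation*}
\norm{\w_{t,h}} \;\le\; 2H \sum_{\tau=1}^{t-1} \eta^{-\tau} \norm{\Sigma_{t,h}^{-1} \bphi_{\tau,h}}.
\end{equation*}
I would then apply Cauchy--Schwarz with the weights $\eta^{-\tau}$ split as $\sqrt{\eta^{-\tau}}\cdot \sqrt{\eta^{-\tau}}$, producing the factor $\sqrt{\sum_\tau \eta^{-\tau}}$ and a residual $\sqrt{\sum_\tau \eta^{-\tau} \bphi_{\tau,h}^\top \Sigma_{t,h}^{-2}\bphi_{\tau,h}}$. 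Recognising the latter as the trace of $\Sigma_{t,h}^{-1}\bigl(\sum_\tau \eta^{-\tau}\bphi_{\tau,h}\bphi_{\tau,h}^\top\bigr)\Sigma_{t,h}^{-1}$, the PSD ordering mentioned above lets me bound it by $\sqrt{\mathrm{tr}(\Sigma_{t,h}^{-1})}\leq \sqrt{d\,\eta^{t-1}/\lambda}$. Finally, using the explicit geometric sum $\sum_{\tau=1}^{t-1}\eta^{-\tau} = \eta^{-(t-1)}(1-\eta^{t-1})/(1-\eta)$, the two $\eta^{\pm(t-1)}$ factors cancel and I obtain $\norm{\w_{t,h}} \le 2H\sqrt{d(1-\eta^{t-1})/(\lambda(1-\eta))}$, as claimed.

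For the matrix-norm bound I would exploit the fact that for every $\tau \in \{1,\ldots,t-1\}$,
\begin{equation*}
\eta^{-2\tau} \;=\; \eta^{-\tau}\cdot\eta^{-\tau} \;\le\; \eta^{-\tau}\cdot \eta^{-(t-1)}, \qquad \lambda\eta^{-2(t-1)} \;=\; \eta^{-(t-1)} \cdot \lambda\eta^{-(t-1)},
\end{equation*}
so summing term-by-term yields $\tlSigma_{t,h} \preceq \eta^{-(t-1)} \Sigma_{t,h}$. Sandwiching with $\Sigma_{t,h}^{-1}$ on both sides preserves the ordering and gives $\Sigma_{t,h}^{-1}\tlSigma_{t,h}\Sigma_{t,h}^{-1} \preceq \eta^{-(t-1)}\Sigma_{t,h}^{-1} \preceq \frac{\eta^{-(t-1)}\cdot \eta^{t-1}}{\lambda}\I = \frac{1}{\lambda}\I$. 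Taking operator norms concludes.

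I do not anticipate a real obstacle beyond careful tracking of the geometric factors; the only subtlety is ensuring that the exponential blow-up $\eta^{-\tau}$ appearing in the data terms is compensated by the $\eta^{t-1}$ available from the regulariser in $\Sigma_{t,h}^{-1}$, which is precisely why the specific normalisation $\lambda \eta^{-(t-1)}\I$ (and analogously $\lambda\eta^{-2(t-1)}\I$ for $\tlSigma_{t,h}$) was chosen in Algorithm~\ref{algo:OPT-WLSVI}.
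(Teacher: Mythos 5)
Your proof is correct and follows essentially the same route as the paper: the weight bound comes from the $2H$ bound on the regression targets, a Cauchy--Schwarz with the geometric weights split as $\sqrt{\eta^{-\tau}}\cdot\sqrt{\eta^{-\tau}}$, and the eigenvalue lower bound $\lambda_{\min}(\Sigma_{t,h})\ge\lambda\eta^{-(t-1)}$ (the paper bounds $|\v^\top\w_{t,h}|$ by duality and invokes its trace lemma $\sum_\tau\eta^{-\tau}\bphi_{\tau,h}^\top\Sigma_{t,h}^{-1}\bphi_{\tau,h}\le d$, whereas you fold the same facts into $\mathrm{tr}(\Sigma_{t,h}^{-1})\le d\eta^{t-1}/\lambda$, but this is only a cosmetic reorganization). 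The matrix bound via $\tlSigma_{t,h}\preceq\eta^{-(t-1)}\Sigma_{t,h}$ is identical to the paper's.
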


Any value function estimate produced by Algorithm~\ref{algo:OPT-WLSVI} could be written in the following form  
\begin{equation*}
    V^{\w, \A}(\cdot) = \min \{ \max_{a \in \calA} \{ \w^\top \bphi(\cdot, a) + \sqrt{\bphi(\cdot, a)^\top \A \bphi(\cdot, a)} \} , H\}
\end{equation*}
where $\w \in \R^d$ and $\A \in \R^{d \times d}$ is a symmetric definite positive matrix that are in 
\begin{equation*}
  \calG = \left \{ \w, \A: \norm{\w} \leq 2 H \sqrt{  \frac{d}{\lambda (1-\eta)}} \text{ and } \fnorm{\A} \leq \frac{\sqrt{d} \beta^2}{\lambda} \right \}  
\end{equation*}

The $\epsilon$-covering number of $\calG$, identified as Euclidean ball in $\R^{d+d^2}$ of radius $2 H \sqrt{  \frac{d}{\lambda (1-\eta)}} + \frac{\beta^2 \sqrt{d}}{\lambda}$, is bounded by $\left( 3\left ( 2 H \sqrt{  \frac{d}{\lambda (1-\eta)}} + \frac{\beta^2 \sqrt{d}}{\lambda} \right)/ \epsilon \right)^{d+d^2}$. The latter number is exponential in the dimension $d$ but only the square root of its logarithm, which is linear in $d$, will contribute to the bound as we will show next. 

By applying the concentration of weighted self-normalized processes~\citep{russac2019weighted} and using a union bound argument over an $\epsilon$-net of $\calG$ with an appropriate value of $\epsilon$, we obtain the desired high probability bound stated in the following Lemma

\begin{lemma} \label{lem: transiton concentration}
For any $\delta \in (0, 1)$, with probability at least $1 - \delta/2$, we have for all $(t, h) \in [K] \times [H]$,
\begin{align*}
\norm{\sum_{\tau=1}^{t-1}  \eta^{-\tau}\bphi_{\tau, h} \epsilon_{\tau, h}}_{\tlSigma_{t,h}^{-1}} \leq C d H  \sqrt{ \log \left(\frac{d H \beta}{\lambda (1 - \eta)}  \cdot \frac{2}{\delta}\right)}
\end{align*}
where $C >0$ is an absolute constant. 
\end{lemma}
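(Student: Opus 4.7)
The plan is to reduce the problem to a concentration inequality for fixed value functions via a covering argument, since the main obstacle is that $V_{t,h+1}$ depends on the same past transitions that appear in the sum. By Lemma~\ref{lem: iterates boundness} any value function produced by the algorithm has the form $V^{\w,\A}$ with $(\w,\A) \in \calG$, so it suffices to obtain the bound uniformly over $\calG$ rather than for a single fixed function.

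First I would fix an arbitrary pair $(\w,\A) \in \calG$ and consider the noise sequence $\epsilon_{\tau,h}^{\w,\A} = V^{\w,\A}(s_{\tau,h+1}) - [\P_{\tau,h} V^{\w,\A}](s_{\tau,h}, a_{\tau,h})$. Conditionally on the filtration generated by all information up to step $h$ of episode $\tau$, this is a bounded (by $H$) zero-mean random variable, so the weighted self-normalized concentration inequality of Russac et al.\ applies directly to $\sum_\tau \eta^{-\tau} \bphi_{\tau,h}\, \epsilon_{\tau,h}^{\w,\A}$, yielding a bound of order $H \sqrt{\log \det(\tlSigma_{t,h}/(\lambda\eta^{-2(t-1)}))/\delta}$ in the $\tlSigma_{t,h}^{-1}$-norm. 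The determinant term is $\widetilde{\calO}(d \log(1/(1-\eta)))$ by a standard argument using $\norm{\bphi_{\tau,h}} \leq 1$ and the ratio $\eta^{-2\tau}/\eta^{-2(t-1)} \leq 1$.

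Next I would discretize $\calG$ by an $\epsilon$-net $\calN_\epsilon$. Since $\calG$ is contained in a Euclidean ball of radius $R = 2H\sqrt{d/(\lambda(1-\eta))} + \beta^2\sqrt{d}/\lambda$ in $\R^{d+d^2}$, its covering number is at most $(3R/\epsilon)^{d+d^2}$. Applying a union bound over $\calN_\epsilon$ replaces $\log(1/\delta)$ by $\log(|\calN_\epsilon|/\delta) = \calO((d + d^2)\log(R/\epsilon) + \log(1/\delta))$, which after taking square roots contributes a factor of order $d$ (giving the $dH$ prefactor in the claim). I would also take a union bound over $h \in [H]$ and $t \in [K]$ at the cost of logarithmic factors absorbed in $\imath$.

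The main technical obstacle is transferring the bound from the net to arbitrary $(\w,\A) \in \calG$. For this, I would show that the map $(\w,\A) \mapsto V^{\w,\A}$ is Lipschitz in the sup norm: if $\norm{\w-\w'} \leq \epsilon$ and $\fnorm{\A-\A'} \leq \epsilon$, then $\norm{V^{\w,\A} - V^{\w',\A'}}_\infty \leq \epsilon + \sqrt{\epsilon}$ (using $|\sqrt{x}-\sqrt{y}| \leq \sqrt{|x-y|}$ for the bonus part and $\norm{\bphi} \leq 1$). The resulting perturbation of $\sum_\tau \eta^{-\tau} \bphi_{\tau,h} \epsilon_{\tau,h}$ in the $\tlSigma_{t,h}^{-1}$-norm can then be controlled by bounding $\sum_\tau \eta^{-\tau}\norm{\bphi_{\tau,h}}_{\tlSigma_{t,h}^{-1}}$ times the sup-norm perturbation; this upper bound scales polynomially in $1/(1-\eta)$, $1/\lambda$, and $d$. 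Choosing $\epsilon$ of order $\lambda(1-\eta)/(dH)$ makes the discretization error $\calO(dH)$, which matches (and is absorbed by) the stochastic bound. Combining the stochastic bound at the net, the discretization error, and the union bound yields precisely the stated rate $CdH\sqrt{\log(dH\beta/(\lambda(1-\eta)\delta))}$.
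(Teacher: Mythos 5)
Your proposal is correct and follows essentially the same route as the paper: reduce to the function class $V^{\w,\A}$ with $(\w,\A)\in\calG$ via the boundedness of iterates, apply the weighted self-normalized concentration of \citet{russac2019weighted} together with the determinant bound for a fixed function, union-bound over an $\epsilon$-net of $\calG$ using the covering number of the Euclidean ball in $\R^{d+d^2}$, and transfer to arbitrary $(\w,\A)$ via the sup-norm Lipschitz estimate $\norm{\w-\tilde\w}+\sqrt{\fnorm{\A-\tilde\A}}\leq 2\sqrt{\epsilon}$. The only cosmetic difference is the choice of $\epsilon$ (the paper takes $\epsilon=\lambda(1-\eta)^2/16$ so the discretization error is $O(1)$ rather than $O(dH)$), which does not affect the final rate.
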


Finally, by combining the single error decomposition in Equation~\eqref{eq: single-step decomp} and the transition concentration in Lemma~\ref{lem: transiton concentration} with an appropriate choice of $\beta$, we obtain the following high probability single-step bound.
\begin{lemma}[Key lemma]~\label{lemma: key lemma}
There exists an absolute value $c$ such that $\beta = c d H \sqrt{\imath}$ where $\imath = \log \left( \frac{2 d H}{(1-\eta) \delta} \right)$, $\lambda =1$ and for any fixed policy $\pi$, we have with probability at least $1-\delta/2$ for all $(s, a, h, t) \in \calS \times \calA \times [H] \times [K]$,
\begin{align*}
    & \Big |\bphi(s, a)^\top \w_{t, h} - Q^\pi_{t,h}(s, a) - [P_{t,h}(V_{t,h+1} - V^\pi_{t, h+1})](s, a) \Big | \\
    & \quad \leq  2 H \texttt{bias}(t, h) + \beta
     \norm{\bphi(s, a)}_{\Sigma_{t,h}^{-1} \tlSigma_{t,h} \Sigma_{t,h}^{-1}}.
\end{align*}

\end{lemma}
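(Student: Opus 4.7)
The plan is to derive the two-sided (absolute-value) version of the single-step error decomposition underlying Equation~\eqref{eq: single-step decomp}, invoke Lemma~\ref{lem: transiton concentration} to control the only remaining stochastic quantity, and finally choose $\beta$ large enough to absorb all of the bonus coefficients into a single UCB term. Concretely, I would start from the exact signed decomposition of $\bphi(s, a)^\top \w_{t, h} - Q^\pi_{t,h}(s, a)$ into a reward bias, a reward variance, a transition bias (against $V^\pi_{t,h+1}$), a transition variance, and a value-function-difference term, and rewrite the last piece as $[\P_{t,h}(V_{t,h+1} - V^\pi_{t,h+1})](s,a) + [(\bar{\P}_{t,h} - \P_{t,h})(V_{t,h+1} - V^\pi_{t,h+1})](s,a)$ so that the genuine $\P_{t,h}$ rather than $\bar{\P}_{t,h}$ appears on the left (this is what enables the telescoping needed in the regret proof). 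Moving $[\P_{t,h}(V_{t,h+1} - V^\pi_{t, h+1})](s, a)$ to the left-hand side, taking absolute values, and applying the triangle inequality reduces the task to bounding the absolute values of the reward bias, of two transition biases, of the reward variance, and of the transition variance.

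Next I would apply Lemma~\ref{lemma: bias} three times: once on the reward bias and twice on the transition biases, using $\|V^\pi_{t,h+1}\|_\infty \le H$ and $\|V_{t,h+1} - V^\pi_{t, h+1}\|_\infty \le H$ (both functions being clipped to $[0,H]$). Summing yields the first piece $\texttt{bias}_\r(s,a) + 2\,\texttt{bias}_\P(s,a) = \texttt{bias}(t,h)$. The reward variance is treated deterministically by Cauchy--Schwarz and the spectral bound $\|\tlSigma_{t,h}^{-1}\| \le \lambda^{-1}\eta^{2(t-1)}$ exactly as in the excerpt, giving at most $\sqrt{d\lambda}\,\norm{\bphi(s,a)}_{\Sigma_{t,h}^{-1}\tlSigma_{t,h}\Sigma_{t,h}^{-1}}$. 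For the transition variance I would start from the Cauchy--Schwarz inequality
\begin{align*}
    \bigl|[(\widehat{\P}_{t,h} - \bar{\P}_{t,h})V_{t,h+1}](s,a)\bigr|
    \le \Bigl(\bigl\|{\textstyle\sum_{\tau=1}^{t-1}}\eta^{-\tau}\bphi_{\tau,h}\epsilon_{\tau,h}\bigr\|_{\tlSigma_{t,h}^{-1}} + H\sqrt{d\lambda}\Bigr)\norm{\bphi(s,a)}_{\Sigma_{t,h}^{-1}\tlSigma_{t,h}\Sigma_{t,h}^{-1}}
\end{align*}
and apply Lemma~\ref{lem: transiton concentration} uniformly in $(t,h)\in[K]\times[H]$ to control the self-normalized sum.

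The main obstacle is closing the choice of $\beta$ self-consistently. With $\lambda = 1$ and $\imath = \log(2dH/((1-\eta)\delta))$, the total coefficient multiplying $\norm{\bphi(s,a)}_{\Sigma_{t,h}^{-1}\tlSigma_{t,h}\Sigma_{t,h}^{-1}}$ is at most $C'dH\sqrt{\log(dH\beta/((1-\eta)\delta))} + (1+H)\sqrt{d}$. Setting $\beta = c\,dH\sqrt{\imath}$, one has $\log(\beta) \le \log(c\,dH) + \tfrac12 \log\imath$, so $\log(dH\beta/((1-\eta)\delta)) = O(\imath)$, and choosing the absolute constant $c$ sufficiently large makes $\beta$ dominate the whole coefficient; this fixpoint-style choice is the only real technical delicacy, familiar from OFUL-style proofs but slightly more intricate here because the effective horizon $1/(1-\eta)$ enters $\imath$. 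A secondary subtlety is that Lemma~\ref{lem: transiton concentration} is obtained by a union bound over an $\epsilon$-net of the class $\calG$; I would therefore check via Lemma~\ref{lem: iterates boundness} that the iterates actually produced by Algorithm~\ref{algo:OPT-WLSVI} — namely $\w_{t,h+1}$ together with $\A = \beta^2\Sigma_{t,h+1}^{-1}\tlSigma_{t,h+1}\Sigma_{t,h+1}^{-1}$ — indeed lie in $\calG$, so that the concentration applies to the algorithm's own value iterate $V_{t,h+1}$.
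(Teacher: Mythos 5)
Your proposal is correct and follows essentially the same route as the paper: the same five-term decomposition through the weighted average MDP, three applications of Lemma~\ref{lemma: bias}, the deterministic Cauchy--Schwarz bound on the reward variance, the uniform concentration of Lemma~\ref{lem: transiton concentration} (with the membership check of the iterates in $\calG$ via Lemma~\ref{lem: iterates boundness}), and the self-consistent choice $\beta = c\,dH\sqrt{\imath}$ absorbing the logarithm of $\beta$ itself. You also correctly flag the two points the paper glosses over in the main text, namely the need for the two-sided version of the decomposition and the verification that the algorithm's own value iterates lie in the covered class.
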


\subsection{Optimism}
Now, we show that the true value functions can be upper bounded by the value functions
computed by \textsc{opt-wlsvi} plus a bias term. In fact, unlike the stationary case, we act optimistically with respect to the weighted average MDP. To prove this, we use the key Lemma~\ref{lemma: key lemma} in the previous section and we proceed by induction argument over steps $h \in [H]$. 

\begin{lemma}[Optimism] \label{lem: optimism}
For all $(s, a, t, h) \in \calS \times \calA \times [K] \times [H]$, we have with probability at least $1 - \delta / 2$
\begin{align}
    Q_{t,h}(s, a) + 2 H \sum_{h' =h}^H \texttt{bias}(t, h) \geq Q_{t, h}^\star(s, a)
\end{align}
\end{lemma}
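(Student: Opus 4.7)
The argument is a backward induction on the step index $h$, running from $h = H+1$ down to $h = 1$, carried out on the high-probability event of Lemma~\ref{lemma: key lemma}. The induction hypothesis at level $h$ is that, for all $(s, a, t) \in \calS \times \calA \times [K]$,
\begin{equation*}
Q_{t,h}(s,a) + \sum_{h' = h}^{H} \texttt{bias}(t, h') \; \geq \; Q^\star_{t, h}(s, a).
\end{equation*}
The base case $h = H+1$ is immediate, since by convention $Q_{t, H+1} \equiv 0 \equiv Q^\star_{t, H+1}$.

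For the inductive step at level $h \le H$, I would invoke Lemma~\ref{lemma: key lemma} with the fixed policy $\pi = \pi^\star_t$, the optimal policy for episode $t$. Since each $\pi^\star_t$ depends only on the (unknown but deterministic) MDP and not on the algorithm's random data, applying the key lemma once per $t \in [K]$ and union-bounding absorbs only an extra $\log K$ factor into the constant $c$ defining $\beta$ (equivalently, into $\imath$). Using $Q^{\pi^\star_t}_{t, h} = Q^\star_{t,h}$, $V^{\pi^\star_t}_{t, h+1} = V^\star_{t, h+1}$, and the definition $Q_{t,h}(s,a) = \bphi(s,a)^\top \w_{t,h} + \beta \norm{\bphi(s,a)}_{\Sigma_{t,h}^{-1} \tlSigma_{t,h} \Sigma_{t,h}^{-1}}$, the key lemma yields
\begin{equation*}
Q_{t,h}(s,a) \; \geq \; Q^\star_{t,h}(s,a) + [\P_{t,h}(V_{t, h+1} - V^\star_{t, h+1})](s,a) - \texttt{bias}(t, h).
\end{equation*}
Thus it suffices to prove the pointwise bound $[\P_{t,h}(V_{t, h+1} - V^\star_{t, h+1})](s, a) \geq -\sum_{h' = h+1}^{H} \texttt{bias}(t, h')$ and then rearrange.

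To propagate the induction, maximize the hypothesis at level $h+1$ over actions to get, for every $s'$, $\max_{a'} Q_{t, h+1}(s', a') + \sum_{h' = h+1}^{H} \texttt{bias}(t, h') \geq V^\star_{t, h+1}(s')$. I would then transfer this inequality to $V_{t, h+1}(s') = \min\{\max_{a'} Q_{t,h+1}(s', a'), H\}$ by a short case split: if $\max_{a'} Q_{t,h+1}(s',a') \leq H$ the bound carries over verbatim; otherwise $V_{t, h+1}(s') = H \geq V^\star_{t, h+1}(s')$ (using $V^\star \le H$), so the inequality holds with the nonnegative bias term to spare. Integrating both sides under $\P_{t,h}(\cdot \mid s, a)$ gives exactly the needed lower bound on the next-step correction, and substituting back completes the inductive step.

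The main obstacle is not conceptual but bookkeeping: correctly handling the $\min\{\cdot, H\}$ clipping when transferring the inductive hypothesis from $Q_{t, h+1}$ to $V_{t, h+1}$, and ensuring that the single-policy guarantee of Lemma~\ref{lemma: key lemma} is applied coherently to each $\pi^\star_t$ via a union bound over $t \in [K]$ absorbed into $\imath$. Once these are in place, the proof reduces to one invocation of the key lemma per episode plus routine backward-induction algebra.
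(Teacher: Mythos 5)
Your proposal takes essentially the same route as the paper's proof: backward induction from $h=H+1$, one invocation of Lemma~\ref{lemma: key lemma} with the optimal policy at step $h$, and propagation of the inductive hypothesis through $V_{t,h+1}$ by maximizing over actions and integrating under $\P_{t,h}$. Your explicit case split for the $\min\{\cdot,H\}$ clipping is a refinement the paper silently skips, and the union bound over $t\in[K]$ you propose is actually unnecessary, since the high-probability event behind Lemma~\ref{lemma: key lemma} comes only from the concentration of the algorithm's own value functions and does not depend on the comparison policy $\pi$, so the guarantee already holds simultaneously for all $\pi^\star_t$ on a single event.
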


\subsection{Final Regret Analysis}
Now, having the results provided in previous sections at hand, we turn to proving the regret bound of our algorithm. Let $\pi_t$  the policy executed by the algorithm in step $h$ for $H$ steps to reach the end of the episode. If we define $\delta_{t,h} \triangleq  V_{t,h}(s_{t, 1}) - V^{\pi_t}_{t,h}(s_{t, h})$, a straightforward application of Lemma~\ref{lem: optimism} is that the regret is upper bounded by the sum of $\delta_{t,h}$ and bias terms with probability at least $1-\delta/2$ i.e
\begin{align} \label{eq: opt bound}
    \textsc{Regret}(K) \underbrace{\leq}_{\text{by optimism}} \sum_{t=1}^K \delta_{t,h} + 2 H  \sum_{t=1}^K \sum_{h=1}^H \texttt{bias}(t, h).
\end{align}
The policy $\pi_t$ is the greedy policy with respect to $Q_{t,h}$, and $a_{t,h} = \pi_t(s_t, h) = \arg \max_{a \in \calA} Q_{t,h}(s_{t,h}, a)$. Therefore, we have $\delta_{t,h} = Q_{t,h}(s_{t,h}, a_{t,h}) - Q^{\pi_t}_{t,h}(s_{t,h}, a_{t,h})$. Using the definition of $Q_{t,h}$ and the key Lemma~\ref{lemma: key lemma}, we obtain with probability at least $1-\delta /2$,
\begin{align*}
    \delta_{t,h}
     & \leq [P_{t,h}(V_{t,h+1} - V^{\pi_t}_{t, h+1})](s_{t,h}, a_{t,h})  + 2 \beta
     \norm{\bphi_{t,h}}_{\Sigma_{t,h}^{-1} \tlSigma_{t,h} \Sigma_{t,h}^{-1}} + 2 H \texttt{bias}(t, h)\\
     & = \delta_{t, h+1} + \xi_{t,h+1} + 2 \beta
     \norm{\bphi_{t,h}}_{\Sigma_{t,h}^{-1} \tlSigma_{t,h} \Sigma_{t,h}^{-1}} 
     + 2 H \texttt{bias}(t, h)
\end{align*}
where we define $\xi_{t,h+1} = [P_{t,h}(V_{t,h+1} - V^{\pi_t}_{t, h+1})](s_{t,h}, a_{t,h}) - (V_{t,h+1} - V^{\pi_t}_{t, h+1})(s_{t, h+1})$. Unrolling the last inequality $H$ times, we obtain 
\begin{align} \label{eq: delta bound}
    \delta_{t,1}  \leq \sum_{h=1}^H \xi_{t,h} + 2 \beta \sum_{h=1}^H \norm{\bphi_{t,h}}_{\Sigma_{t,h}^{-1} \tlSigma_{t,h} \Sigma_{t,h}^{-1}} + 2 H \sum_{h=1}^H \texttt{bias}(t, h)
\end{align}

Hence, by combining Equations~\eqref{eq: opt bound} and~\eqref{eq: delta bound}, we obtain with probability at least $1-\delta /2$, 
\begin{align} \label{eq: regret bound with bonus}
    \textsc{Regret}(K) & \leq \underbrace{\sum_{t=1}^K\sum_{h=1}^H \xi_{t,h} }_{(A)} + 2 \beta \underbrace{\sum_{t=1}^K \sum_{h=1}^H \norm{\bphi_{t,h}}_{\Sigma_{t,h}^{-1} \tlSigma_{t,h} \Sigma_{t,h}^{-1}}}_{(B)} \nonumber \\
    & \quad + 4 H \underbrace{\sum_{t=1}^K \sum_{h=1}^H \texttt{bias}(t, h)}_{(C)}.
\end{align}
Now, we proceed to upper bound the different terms in the RHS of Equation~\eqref{eq: regret bound with bonus}.
\paragraph{\underline{Term (A):}}
The computation of $V_{t, h}$ is independent from $(s_{t,h}, a_{t,h})$, therefore, $\{ \xi_{t,h} \}$ is $2 H$-bounded martingale difference sequence. Therefore, by Azuma-Hoeffding, we have for all $t>0$, $P\left(\left | \sum_{t=1}^K\sum_{h=1}^H \xi_{t,h} \right | \geq t \right) \leq 2 \exp \left ( \frac{- t^2}{16 H^3 K }\right)$. Then, $P\left(\left | \sum_{t=1}^K\sum_{h=1}^H \xi_{t,h} \right | \geq 4 \sqrt{H^3K \log(4/\delta)} \right) \leq \delta /2$.
Therefore, with probability at least $1-\delta/2$, we have
\begin{align} \label{eq: (A) bound}
    \left | \sum_{t=1}^K\sum_{h=1}^H \xi_{t,h} \right | \leq \mathcal{O}(H^{3/2} \sqrt{K \imath})
\end{align}
\paragraph{\underline{Term (B):}} By application of Cauchy-Schwartz, we obtain
\begin{align*}
    \sum_{t=1}^K \sum_{h=1}^H \norm{\bphi_{t,h}}_{\Sigma_{t,h}^{-1} \tlSigma_{t,h} \Sigma_{t,h}^{-1}} \leq \sqrt{K} \sum_{h=1}^H \sqrt{\sum_{t=1}^K \norm{\bphi_{t,h}}^2_{\Sigma_{t,h}^{-1} \tlSigma_{t,h} \Sigma_{t,h}^{-1}}}.
\end{align*}
From Lemma~\ref{lem: iterates boundness}, we have $\norm{\Sigma_t^{-1} \tlSigma_t \Sigma_t^{-1}} \leq \frac{1}{\lambda}$, then, $\norm{\bphi_{t,h}}_{\Sigma_{t,h}^{-1} \tlSigma_{t,h} \Sigma_{t,h}^{-1}} \leq \frac{1}{\sqrt{\lambda}} \norm{\bphi_{t,h}} =  \norm{\bphi_{t,h}} \leq 1$. So, we can use the bound on the sum of the squared norm of the features provided in proposition 4 of~\citet{russac2019weighted} to obtain 
\begin{align} \label{eq: (B) bound}
    \sum_{t=1}^K \sum_{h=1}^H \norm{\bphi_{t,h}}_{\Sigma_{t,h}^{-1} \tlSigma_{t,h} \Sigma_{t,h}^{-1}} \leq H \sqrt{K} \sqrt{2 d K \log(1 / \eta) + 2 d \log \left(1  +  \frac{1}{d \lambda (1-\eta)}\right)}.
\end{align}

\paragraph{\underline{Term (C):}} We control the bias term using the MDP variation budget as follows.
\begin{align} \label{eq: (C) term}
    & \sum_{t=1}^K \sum_{h=1}^H \texttt{bias}(t, h)  \leq \frac{4 H K \sqrt{d}}{\lambda}
     \frac{\eta^W}{1-\eta} + \sqrt{\frac{d}{\lambda (1-\eta)}} \cdot  \nonumber \\ 
    & \sum_{t=1}^K \sum_{h=1}^H \sum_{s = t - W}^{t-1} \| \btheta_{s, h} - \btheta_{s+1, h}\| + \left \| \bmu_{s,h}(\calS) - \bmu_{s+1, h}(\calS) \right\| \nonumber \\
     & \leq \frac{4 H K \sqrt{d}}{\lambda}
     \frac{\eta^W}{1-\eta} + \sqrt{\frac{d}{\lambda (1-\eta)}} W \Delta.
\end{align}
Finally, the desired regret bound in Theorem~\ref{theo: regret_bound} is obtained by combining Equations~\eqref{eq: regret bound with bonus},~\eqref{eq: (A) bound},~\eqref{eq: (B) bound} and~\eqref{eq: (C) term}.

\section{Missing Proofs of Regret Analysis of \textsc{opt-wlsvi}}

\subsection{Linearity of $Q$-values: Lemma~\ref{lem: linear Q}}

\begin{proof} The definition of non-stationary linear MDP from Assumption~\ref{assumption:non_stat_linear} together with the Bellman equation gives:
\begin{align*}
    Q^\pi_{t,h} & = r_{t,h}(s, a) + [\P_{t,h} V^\pi_{t, h+1}](s, a) \\
    & = \bphi(s, a)^\top \btheta_{t,h} + 
    \int_{s'} \bphi(s, a)^\top V^\pi_{t, h+1}(s') d\bmu_{t,h}(s') \\
    & = \bphi(s, a)^\top \left( \btheta_{t,h} + \int_{s'} V^\pi_{t, h+1}(s') d\bmu_{t,h}(s') \right)
\end{align*}
We define $\w^\pi_{t,h}$ to be the term inside the parentheses.
\end{proof}

\subsection{Non-Stationarity Bias}
\subsubsection{Proof of Lemma~\ref{lemma: bias}}
\textbf{\underline{Reward Bias:}}
\begin{align*}
    & |r_{t, h}(s, a) - \bar{r}_{t, h}(s, a)| \\
    & \leq \left | \bphi(s, a)^\top \left( \btheta_{t, h} - \Sigma_{t, h}^{-1} 
    \left( \sum_{\tau = 1}^{t-1} \eta^{-\tau} \bphi_{\tau, h} \bphi_{\tau, h}^\top \btheta_{\tau, h} + \lambda \eta^{-(t-1)} \btheta_{t, h} \right) \right) \right  | \\
    & = \left |    \phi(s,a )^\top \sum_{\tau = 1}^{t-1} \Sigma_{t, h}^{-1} \eta^{-\tau} \bphi_{\tau, h} \bphi_{\tau, h}^\top (\btheta_{t, h} - \btheta_{\tau, h}) \right | \\
    & \leq \underbrace{\left |   \phi(s,a )^\top \sum_{\tau = t - W}^{t-1} \Sigma_{t, h}^{-1} 
     \eta^{-\tau} \bphi_{\tau, h} \bphi_{\tau, h}^\top (\btheta_{t, h} - \btheta_{\tau, h}) \right | }_{(\star)}
     + \underbrace{\left | \phi(s,a )^\top \sum_{\tau = 1}^{t- W - 1} \Sigma_{t, h}^{-1} 
     \eta^{-\tau} \bphi_{\tau, h} \bphi_{\tau, h}^\top (\btheta_{t, h} - \btheta_{\tau, h}) \right | }_{(\star \star)}
\end{align*}

\paragraph{Bound on $(\star)$:}
\begin{align*}
    & \left |   \phi(s,a )^\top \sum_{\tau = t - W}^{t-1} \Sigma_{t, h}^{-1} 
     \eta^{-\tau} \bphi_{\tau, h} \bphi_{\tau, h}^\top (\btheta_{t, h} - \btheta_{\tau, h}) \right | 
     \\
     & = \left |  \sum_{\tau = t - W}^{t-1}\eta^{-\tau}  \phi(s,a )^\top \Sigma_{t, h}^{-1} 
     \bphi_{\tau, h} \bphi_{\tau, h}^\top (\btheta_{t, h} - \btheta_{\tau, h}) \right | \\
     & \leq \sum_{\tau = t - W}^{t-1}\eta^{-\tau}  \left | \phi(s,a )^\top \Sigma_{t, h}^{-1} 
     \bphi_{\tau, h} \right | \cdot \left | \bphi_{\tau, h}^\top (\btheta_{t, h} - \btheta_{\tau, h}) \right | \\
     & \leq \sum_{\tau = t - W}^{t-1}\eta^{-\tau}  \left | \phi(s,a )^\top \Sigma_{t, h}^{-1} 
     \bphi_{\tau, h} \right |  \norm{\bphi_{\tau, h}} \norm{\btheta_{t, h} - \btheta_{\tau, h}} \\
     & \leq \sum_{\tau = t - W}^{t-1}\eta^{-\tau}  \left | \phi(s,a )^\top \Sigma_{t, h}^{-1} 
     \bphi_{\tau, h} \right | \norm{\btheta_{t, h} - \btheta_{\tau, h}} \tag{$\norm{\bphi_{\tau, h}} \leq 1$} \\
     & = \sum_{\tau = t - W}^{t-1}\eta^{-\tau}  \left | \phi(s,a )^\top \Sigma_{t, h}^{-1} 
     \bphi_{\tau, h} \right | \norm{ \sum_{s=\tau}^{t-1}\btheta_{s, h} - \btheta_{s+1, h}} \\
     & \leq \sum_{\tau = t - W}^{t-1}\eta^{-\tau}  \left | \phi(s,a )^\top \Sigma_{t, h}^{-1} 
     \bphi_{\tau, h} \right |\sum_{s=\tau}^{t-1}  \norm{ \btheta_{s, h} - \btheta_{s+1, h}} \\
     & \leq \sum_{s = t - W}^{t-1} \sum_{\tau = t-W}^{s} \eta^{-\tau}  \left | \phi(s,a )^\top \Sigma_{t, h}^{-1} 
     \bphi_{\tau, h} \right |  \norm{ \btheta_{s, h} - \btheta_{s+1, h}} \tag{$\sum_{\tau = t - W}^{t-1} \sum_{s=\tau}^{t-1} =  \sum_{s = t - W}^{t-1} \sum_{\tau = t-W}^{s} $} \\
     & \leq \sum_{s = t - W}^{t-1} \sqrt{ \bigg[ \sum_{\tau = t-W}^{s} \eta^{-\tau} \phi(s, a)^\top \Sigma_{t, h}^{-1}  \phi(s, a) \bigg]  \cdot \biggl [ \sum_{\tau = t-W}^{s}  \eta^{-\tau} \bphi_{\tau, h}^\top \Sigma_{t, h}^{-1} \bphi_{\tau, h}\bigg] }  \\
     & \quad \quad \cdot \norm{ \btheta_{s, h} - \btheta_{s+1, h}} \tag{Cauchy-Schwartz}\\
     & \leq \sum_{s = t - W}^{t-1} \sqrt{ \bigg[ \sum_{\tau = t-W}^{s} \eta^{-\tau} \phi(s, a)^\top \Sigma_{t, h}^{-1}  \phi(s, a) \bigg]  \cdot \sqrt{d} } \cdot  \norm{ \btheta_{s, h} - \btheta_{s+1, h}} \tag{by Lemma~\ref{lem:basic_ineq}} \\
     &  \leq \sum_{s = t - W}^{t-1} \sqrt{d} \sum_{s = t - W}^{t-1} \sqrt{ \frac{\sum_{\tau = t-W}^{t-1} \eta^{-\tau}}{ \lambda \eta^{-(t-1)}}} 
      \norm{ \btheta_{s, h} - \btheta_{s+1, h}}  \tag{$\norm{\phi(s, a)} \leq 1$ and $\lambda_{\max}(\Sigma_{t, h}^{-1}) \leq \frac{1}{\lambda \eta^{-(t-1)}}$} \\
      & \leq \sqrt{\frac{d}{\lambda (1-\eta)}} \sum_{s = t - W}^{t-1} \norm{ \btheta_{s, h} - \btheta_{s+1, h}}
\end{align*}
\paragraph{Bound on $(\star \star)$:}

\begin{align*}
    & \left | \phi(s,a )^\top \sum_{\tau = 1}^{t- W - 1} \Sigma_{t, h}^{-1} 
     \eta^{-\tau} \bphi_{\tau, h} \bphi_{\tau, h}^\top (\btheta_{t, h} - \btheta_{\tau, h}) \right |
    \\ & \leq \frac{1}{\lambda} \norm{\phi(s, a)} \sum_{\tau = 1}^{t- W - 1} 
     \eta^{t-\tau-1} \| \bphi_{\tau, h} \| \cdot  | \bphi_{\tau, h}^\top (\btheta_{t, h} - \btheta_{\tau, h}) | \tag{$\lambda_{\max}(\Sigma_{t, h}^{-1}) \leq \frac{1}{\lambda \eta^{-(t-1)}}$}\\
     & \leq  \frac{1}{\lambda} \norm{\phi(s, a)} \sum_{\tau = 1}^{t- W - 1} \eta^{t-\tau-1} \| \bphi_{\tau, h}\|^2 \| \btheta_{t, h} - \btheta_{\tau, h}\| \\
     & \leq \frac{2 \sqrt{d}}{\lambda}
     \frac{\eta^W}{1-\eta} \tag{$\phi(s, a) \leq 1$ and $\norm{\btheta_{t,h}} \leq \sqrt{d}$}
\end{align*}


\textbf{\underline{Transition Bias:}}
$\forall f: \calS \rightarrow \R$ such that $\| f\|_{\infty} < \infty$ (real-valued bounded function), similarly to what we have done for the reward function, we obtain
\begin{align*}
    \left | [(\P_{t, h} - \bar{\P}_{t,h})f](s, a) \right | & \leq \left \| \Sigma_{t, h}^{-1} \sum_{\tau = 1}^{t-1} \eta^{-\tau} \bphi_{\tau, h} \bphi_{\tau, h}^\top  \int f(s') (d\bmu_{t,h}(s') - d\bmu_{\tau, h}(s')) \right\| \tag{$\| \bphi(s, a)\| \leq 1$} \\
    & \leq  \sqrt{\frac{d}{\lambda (1-\eta)}} \sum_{s = t - W}^{t-1} \left \|  \int f(s') (d\bmu_{s,h}(s') - d\bmu_{s+1, h}(s')) \right\| \\
     & \quad + \frac{1}{\lambda} \sum_{\tau = 1}^{t- W - 1} \eta^{t-\tau-1} \left \| \int f(s') (d\bmu_{t,h}(s') - d\bmu_{\tau, h}(s')) \right \|
\end{align*}
Furthermore, 
\begin{align*}
    \left \| \int f(s') (d\bmu_{t,h}(s') - d\bmu_{\tau, h}(s')) \right \| & = \sqrt{ \sum_{l=1}^d \left | \int f(s') (d\bmu_{t,h}^{(l)}(s') - d\bmu_{\tau, h}^{(l)}(s'))\right|^2} \\
    & \leq \| f\|_{\infty}  \sqrt{ \sum_{l=1}^d | \bmu_{t,h}^{(l)}(\calS) - \bmu_{\tau, h}^{(l)}(\calS))|^2} \\
    & = \| f\|_{\infty} \| \bmu_{s,h}(\calS) - \bmu_{s+1,h}(\calS)\| \\
    & \leq 2 \sqrt{d} \| f\|_{\infty} 
\end{align*}
Therefore,

\begin{align*}
    \left | [(\P_{t, h} - \bar{\P}_{t,h})f](s, a) \right | & \leq  \| f\|_{\infty} \left (  \sqrt{\frac{d}{\lambda (1-\eta)}} \sum_{s = t - W}^{t-1} \left \| (\bmu_{s,h}(\calS) - \bmu_{s+1, h}(\calS)) \right\| + \frac{2 \sqrt{d} } {\lambda} \frac{\eta^W}{1- \eta} \right )
\end{align*}

\subsection{Single Step Error Decomposition}
We provide here the full derivation of the single-error decomposition. We have for all $(t, h) \in [K] \times [H]$

\begin{align*}
    \bphi(s, a)^\top \w_{t, h} - Q^\pi_{t,h}(s, a) =
    & \quad \underbrace{(\bar{r}_{t,h} - r_{t,h})(s, a)}_{\text{reward bias}} +  \underbrace{(\widehat{r}_{t,h} - \bar{r}_{t,h})(s, a)}_{\text{reward variance}} + \\
& \quad \underbrace{[(\bar{P}_{t,h} - P_{t,h})V^\pi_{t, h+1}](s, a)}_{\text{transition bias}} +  \underbrace{[(\widehat{P}_{t,h} - \bar{P}_{t,h})V_{t,h+1}](s, a)}_{\text{transition variance}} + \\
& \quad \underbrace{[\bar{P}_{t, h}(V_{t, h+1} - V^\pi_{t, h+1})](s, a)}_{\text{difference in value functions of next step}}.
\end{align*}

\textbf{\underline{Reward $\&$ transition bias:}} Thanks to Lemma~\ref{lemma: bias}, we have
\begin{align*}
    |\bar{r}_{t, h}(s, a) - r_{t, h}(s, a)| & \leq \texttt{bias}_{r}(t, h)  , \\
     \Big | [(\bar{P}_{t, h} -  P_{t,h})V^\pi_{t,h+1}](s, a) \Big | & \leq \texttt{bias}_{P}(t, h). \tag{$\|V^\pi_{t,h+1}\|_\infty \leq H$}
\end{align*}

\textbf{\underline{Difference in value functions of next step:}}

\begin{align*}
    [\bar{P}_{t, h}(V_{t, h+1} - V^\pi_{t, h+1})](s, a) & = [P_{t,h}(V_{t,h+1} - V^\pi_{t, h+1})](s, a) + [(\bar{P}_{t,h} - P_{t, h})(V_{t,h+1} - V^\pi_{t, h+1})](s, a) \\
    & \leq [P_{t,h}(V_{t,h+1} - V^\pi_{t, h+1})](s, a) + \texttt{bias}_{P}(s, a). \tag{$\| V_{t,h+1} - V^\pi_{t, h+1}\|_\infty \leq H$}
\end{align*}

\textbf{\underline{Reward variance:}}  The reward variance here reduces simply to the bias due to the regularization as we assume that $r$ is a deterministic function.

\begin{align*}
    \Big | (\widehat{r}_{t,h} - \bar{r}_{t,h})(s, a) \Big| &= \lambda \eta^{- (t-1)} | \la \bphi(s, a), \Sigma_{t,h}^{-1} \btheta_{t, h}  \ra| \\ 
    & \leq \lambda \eta^{-(t-1)} \norm{\bphi(s, a)}_{\Sigma_{t,h}^{-1} \tlSigma_{t,h} \Sigma_{t,h}^{-1}} \norm{\Sigma_{t,h}^{-1} \btheta_{t,h}}_{\Sigma_{t,h} \tlSigma_{t,h}^{-1} \Sigma_{t,h}} \\
    & = \lambda \eta^{-(t-1)} \norm{\bphi(s, a)}_{\Sigma_{t,h}^{-1} \tlSigma_{t,h} \Sigma_{t,h}^{-1}} \norm{\btheta_{t,h}}_{\tlSigma_{t,h}^{-1}} \\
    & \leq  \lambda \eta^{-(t-1)} \norm{\bphi(s, a)}_{\Sigma_{t,h}^{-1} \tlSigma_{t,h} \Sigma_{t,h}^{-1}} \sqrt{\norm{\tlSigma_{t,h}^{-1}}} \norm{\btheta_{t,h}} \\
    & \leq \sqrt{d \lambda} \norm{\bphi(s, a)}_{\Sigma_{t,h}^{-1} \tlSigma_{t,h} \Sigma_{t,h}^{-1}}
\end{align*}
The last step follows from $\norm{\btheta_{t,h}} \leq \sqrt{d}$ (Assumption~\ref{assumption:non_stat_linear}) and $\norm{\tlSigma_{t,h}^{-1}} \leq \frac{1}{\lambda \eta^{-2(t-1)}}$.

If we define $\texttt{bias} \triangleq \texttt{bias}_{r} + 2 \cdot  \texttt{bias}_{P}$ the total non-stationarity bias of the MDP, we can summarize the one-step analysis as follows: 
\begin{align*} 
     \bphi(s, a)^\top \w_{t, h} - Q^\pi_{t,h}(s, a) & \leq
     \quad \texttt{bias}(t, h) + 
     [\P_{t, h}(V_{t, h+1} - V^\pi_{t, h+1})](s, a) + \nonumber \\
    & \quad \sqrt{d \lambda} \norm{\bphi(s, a)}_{\Sigma_{t,h}^{-1} \tlSigma_{t,h} \Sigma_{t,h}^{-1}} +  [(\widehat{P}_{t,h} - \bar{P}_{t,h})V_{t,h}](s, a) \nonumber
\end{align*}

\subsection{Boundness of iterates}
We will start with the following elementary lemma:
\begin{lemma}\label{lem:basic_ineq}
Let $\Sigma_t = \sum_{\tau=1}^{t-1} \eta^{-\tau} \bphi_\tau \bphi_{\tau}^\top + \lambda \eta^{-(t-1)} \I$ where $\bphi_\tau \in \R^d$ and $\lambda > 0 , \eta \in (0, 1)$. Then:
\begin{equation*}
    \sum_{\tau=1}^{t-1} \eta^{-\tau}\bphi_\tau^\top \Sigma_t^{-1}\bphi_\tau \leq d
\end{equation*}
\end{lemma}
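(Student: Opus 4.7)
The plan is to reduce the sum of quadratic forms to a trace identity and then exploit the fact that $\Sigma_t$ differs from the matrix $A \triangleq \sum_{\tau=1}^{t-1} \eta^{-\tau} \bphi_\tau \bphi_\tau^\top$ only by the regularization term $\lambda \eta^{-(t-1)} \I$. Concretely, I would first rewrite each scalar $\bphi_\tau^\top \Sigma_t^{-1} \bphi_\tau$ as $\operatorname{tr}(\Sigma_t^{-1} \bphi_\tau \bphi_\tau^\top)$ using the cyclic property of the trace, pull the trace out of the finite sum by linearity, and combine the resulting expression into
\[
\sum_{\tau=1}^{t-1} \eta^{-\tau} \bphi_\tau^\top \Sigma_t^{-1} \bphi_\tau \;=\; \operatorname{tr}\!\left(\Sigma_t^{-1} \sum_{\tau=1}^{t-1} \eta^{-\tau} \bphi_\tau \bphi_\tau^\top\right) \;=\; \operatorname{tr}(\Sigma_t^{-1} A).
\]

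Next, I would use the identity $A = \Sigma_t - \lambda \eta^{-(t-1)} \I$ to get
\[
\operatorname{tr}(\Sigma_t^{-1} A) \;=\; \operatorname{tr}(\I) - \lambda \eta^{-(t-1)} \operatorname{tr}(\Sigma_t^{-1}) \;=\; d - \lambda \eta^{-(t-1)} \operatorname{tr}(\Sigma_t^{-1}).
\]
Finally, since $\Sigma_t$ is a sum of a positive semidefinite matrix and a strictly positive multiple of the identity, it is positive definite, so $\Sigma_t^{-1}$ is positive definite and in particular $\operatorname{tr}(\Sigma_t^{-1}) \geq 0$. With $\lambda > 0$ and $\eta^{-(t-1)} > 0$, the subtracted term is nonnegative and the bound $\leq d$ follows.

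There is no real obstacle here — the whole argument is a one-line trace manipulation plus positivity of $\Sigma_t^{-1}$. The only thing worth being careful about is not forgetting that the $\eta^{-\tau}$ factors must be absorbed into the sum \emph{before} applying the trace trick so that exactly the $A$ appearing in $\Sigma_t = A + \lambda \eta^{-(t-1)} \I$ is reconstructed; otherwise one would be left with a residual factor that breaks the cancellation.
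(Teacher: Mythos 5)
Your proof is correct and follows essentially the same route as the paper: both reduce the sum to $\operatorname{tr}(\Sigma_t^{-1}\sum_\tau \eta^{-\tau}\bphi_\tau\bphi_\tau^\top)$ via the cyclic property of the trace and then bound this by $d$ using the positive regularization term. The paper does the final step by diagonalizing and summing $\lambda_j/(\lambda_j+\lambda\eta^{-(t-1)})\leq 1$, whereas you use the identity $\operatorname{tr}(\Sigma_t^{-1}A)=d-\lambda\eta^{-(t-1)}\operatorname{tr}(\Sigma_t^{-1})$ directly; these are the same computation in different coordinates, and your version is, if anything, slightly cleaner.
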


\begin{proof}
We have $\sum_{\tau=1}^{t-1} \eta^{-\tau} \bphi_{\tau}^{\top} \Sigma_t^{-1}\bphi_{\tau} = \sum_{\tau=1}^{t-1} \text{tr}\left( \eta^{-\tau} \bphi_\tau^\top \Sigma_t^{-1}\bphi_\tau \right) = \text{tr}\left( \Sigma_t^{-1} \sum_{\tau=1}^{t-1}\eta^{-\tau} \bphi_\tau \bphi_\tau^\top \right)$. Given the eigenvalue decomposition $\sum_{\tau=1}^{t-1} \eta^{-\tau} \bphi_\tau  \bphi_\tau = \text{diag}(\lambda_1, \ldots, \lambda_d)^\top$, we have $\Sigma_t = \text{diag}(\lambda_1 + \lambda \eta^{-(t-1)}, \ldots, \lambda_d + \lambda \eta^{-(t-1)})^\top$, and $\text{tr} \left( \Sigma_t^{-1} \sum_{\tau=1}^{t-1}\eta^{-\tau} \bphi_\tau \bphi_\tau^\top\right) = \sum_{i=1}^d \frac{\lambda_j}{\lambda_j + \lambda \eta^{-(t-1)}} \leq d$
\end{proof}

\subsubsection{Proof of Lemma~\ref{lem: iterates boundness}}

\textbf{\underline{Bound on $\norm{\w_{t,h}}$:}}
For any vector $v \in \R^d$, we have
\begin{align*}
|v^\top \w_{t,h}| & = \left |v^\top \Sigma_{t,h}^{-1} \sum_{\tau=1}^{t-1} \eta^{-\tau} \bphi_{\tau,h} [r_{\tau,h}+ \max_a Q_{\tau, h+1}(s^{\tau, h+1}, a)] \right|\\
& \le \sum_{\tau = 1}^{t-1} \eta^{-\tau} |v^\top \Sigma_{t,h}^{-1} \bphi_{\tau,h} | \cdot 2H 
\le \sqrt{ \bigg[ \sum_{\tau = 1}^{t-1}  \eta^{-\tau} v^\top\Sigma_{t,h}^{-1} v\bigg]  \cdot \biggl [ \sum_{\tau = 1}^{t-1}  \eta^{-\tau} \bphi_{\tau,h}^\top \Sigma_{t,h}^{-1} \bphi_{\tau,h}\bigg] } \cdot 2H\\
& \le 2H \norm{v}\sqrt{ \frac{ \sum_{\tau = 1}^{t-1}  \eta^{-\tau}}{ \lambda \eta^{-(t-1)}} \cdot d}  = 2 H \norm{v} \sqrt{  \frac{d (1 - \eta^{t-1})}{\lambda (1-\eta)}}
\end{align*}
where the third inequality is due to Lemma \ref{lem:basic_ineq} and the fact that the eigenvalues of $\Sigma_{t,h}^{-1}$ are upper bounded by $\frac{1}{\lambda \eta^{-(t-1)}}$. The remainder of the proof follows from the fact that 
$\norm{\w_{t,h}} = \max_{v:\norm{v} = 1} |v^\top \w_{t,h}|$.

\textbf{\underline{Bound on $\norm{\Sigma_t^{-1} \tlSigma_t \Sigma_t^{-1}}$:}}
\begin{equation}
    \tlSigma_t = \sum_{\tau=1}^{t-1} \eta^{-2\tau} \bphi_\tau \bphi_{\tau}^\top + \lambda \eta^{-2(t-1)} \I \leq  \eta^{-(t-1)} \sum_{\tau=1}^{t-1} \eta^{-\tau} \bphi_\tau \bphi_{\tau}^\top + \lambda \eta^{-2(t-1)} \I = \eta^{-(t-1)} \Sigma_t
\end{equation}
Hence,
\begin{equation*}
    \Sigma_t^{-1} \tlSigma_t \Sigma_t^{-1} \leq \eta^{-(t-1)} \Sigma_t^{-1} \Sigma_t \Sigma_t^{-1} =  \eta^{-(t-1)} \Sigma_t^{-1}
\end{equation*}
and 
\begin{equation*}
    \norm{\Sigma_t^{-1} \tlSigma_t \Sigma_t^{-1}} \leq \eta^{-(t-1)} \norm{\Sigma_t^{-1}} \leq \eta^{-(t-1)} \frac{1}{\lambda \eta^{-(t-1)}} = \frac{1}{\lambda}
\end{equation*}

\subsection{Transition Concentration}

\begin{align}
    & \left |[(\widehat{P}_{t,h} - \bar{P}_{t,h})V_{t,h}](s, a) \right| \nonumber \\
    & \leq \Big | \bphi(s, a)^\top \Big( \Sigma_{t,h}^{-1} \sum_{\tau =1}^{k-1} \eta^{-\tau} \bphi_{\tau, h} (V_{t, h+1}(s_{\tau, h+1}) - [\P_{t,h} V_{t, h+1}](s_{\tau, h}, a_{\tau, h}) ) \nonumber \\
    &~- \lambda \eta^{-(t-1)}  \Sigma_{t,h}^{-1} \bmu_{t, h}V_{t, h+1} \Big) \Big |  \nonumber \\
    & \leq \norm{\bphi(s, a)}_{\Sigma_{t,h}^{-1} \tlSigma_{t,h} \Sigma_{t,h}^{-1}} \Big( \norm{\sum_{\tau=1}^{t-1}  \eta^{-\tau}\bphi_{\tau, h} \left( V_{t, h+1}(s_{\tau, h+1}) - [\P_{t,h}V_{t, h+1}](s_{\tau, h}, a_{\tau, h}) \right)}_{\tlSigma_{t,h}^{-1}} \nonumber \\
    & \quad + \lambda \eta^{-(t-1)} \sqrt{\norm{\tlSigma_{t,h}^{-1}}} \norm{\bmu_{t,h}(\calS)} \norm{V_{t,h+1}}_\infty \Big) \tag{Cauchy-Schwarz} \nonumber \\
    & \leq \norm{\bphi(s, a)}_{\Sigma_{t,h}^{-1} \tlSigma_{t,h} \Sigma_{t,h}^{-1}} \Big( \norm{\sum_{\tau=1}^{t-1}  \eta^{-\tau}\bphi_{\tau, h} \left( V_{t, h+1}(s_{\tau, h+1}) - [\P_{t,h}V_{t, h+1}](s_{\tau, h}, a_{\tau, h}) \right)}_{\tlSigma_{t,h}^{-1}} + H \sqrt{d \lambda } \Big ) \label{eq: transition decomp}
\end{align}
The last step follows from $\norm{\bmu_{t,h}(\calS)} \leq \sqrt{d}$ (Assumption~\ref{assumption:non_stat_linear}) and $\norm{\tlSigma_{t,h}^{-1}} \leq \frac{1}{\lambda \eta^{-2(t-1)}}$.

Let us now consider the following function form:
\begin{equation} \label{eq: V form}
    V^{\w, \A}(\cdot) = \min \{ \max_{a \in \calA} \{ \w^\top \bphi(\cdot, a) + \sqrt{\bphi(\cdot, a)^\top \A \bphi(\cdot, a)} \} , H\}
\end{equation}
where $\w \in \R^d$ and $\A \in \R^{d \times d}$ is a symmetric definite positive matrix that are in 
\begin{equation} \label{eq: G set}
  \calG = \left \{ \w, \A: \norm{\w} \leq 2 H \sqrt{  \frac{d}{\lambda (1-\eta)}} \text{ and } \fnorm{\A} \leq \frac{\sqrt{d} \beta^2}{\lambda} \right \}  
\end{equation}

In the technical Lemma~\ref{lemma:union_bound_covering}, we prove a concentration bound that holds uniformly for any function on the form $V^{\w, \A}$ where $\w, \A \in \calG$. The statement and full proof of this lemma is defered to section~\ref{sec: technical lems} of the appendix. As a corollary of Lemma~\ref{lemma:union_bound_covering}, we can prove the concentration of the transition as follows.

 For any $\tau > 0, h \in [H]$, let $\calF_{\tau, h}$ be the $\sigma$-field generated by all the random variables until episode $\tau$, step $h$. $\{ s_{\tau, h} \}$ defines a stochastic process on state space $\calS$ with corresponding filtration $\{ \calF_{\tau, h}\}$. We have 
\begin{align*}
    V_{\tau, h+1}(\cdot) & = \max_{a \in \calA} \{ \min\{ \w_{t,h+1}^\top \bphi(\cdot, a) + \beta_t  [\bphi(\cdot, a)^\top \Sigma_{t, h}^{-1} \tlSigma_{t, h} \Sigma_{t, h}^{-1} \bphi(\cdot, a)]^{1/2}, H\} \} \\
    & = V^{\w, \A}(\cdot)
\end{align*}
where $\w = \w_{t,h+1}$ and $\A = \beta_t^2 \Sigma_{t, h}^{-1} \tlSigma_{t, h} \Sigma_{t, h}^{-1}$. We have $\|\w\| \leq 2 H \sqrt{  \frac{d (1 - \eta^{t-1})}{\lambda (1-\eta)}} $ and $\| \A\|_{F} \leq  \sqrt{d} \beta_t^2 \| A\| \leq \frac{\sqrt{d} \beta_t^2}{\lambda}$ by Lemma~\ref{lem: iterates boundness}. Therefore $(\w, \A) \in \calG$ and we can apply Lemma~\ref{lemma:union_bound_covering}: we have with probability at least $1-\delta/2$
\begin{align} 
    & \norm{\sum_{\tau=1}^{t-1}  \eta^{-\tau}\bphi_{\tau, h} \left( V_{t, h+1}(s_{\tau, h+1}) - [\P_{t,h}V_{t, h+1}](s_{\tau, h}, a_{\tau, h}) \right)}_{\tlSigma_{t,h}^{-1}} \nonumber \\
    & \leq C
    d H \sqrt{ \log \left( d H \beta \frac{1}{\lambda (1 - \eta)} \right) + \log (2/\delta)}
    \label{eq: transiton concentration}
\end{align}

Combining Equation~\eqref{eq: transition decomp} and \eqref{eq: transiton concentration}, we obtain that with probability at least $1-\delta/2$: 

\begin{align*}
    & \left |[(\widehat{P}_{t,h} - \bar{P}_{t,h})V_{t,h}](s, a) \right| \\
    & \leq \norm{\bphi(s, a)}_{\Sigma_{t,h}^{-1} \tlSigma_{t,h} \Sigma_{t,h}^{-1}} \Big( C
    d H \sqrt{ \log \left( d H \beta \frac{1}{\lambda (1 - \eta)} \right) + \log (2/\delta)} + H \sqrt{d \lambda } \Big )
\end{align*}

\subsection{Single-Step High Probability Upper Bound}
\subsubsection{Proof of Lemma~\ref{lemma: key lemma}}
We have shown so far: 
\begin{align*}
     \bphi(s, a)^\top \w_{t, h} & - Q^\pi_{t,h}(s, a)  \leq
     \quad \texttt{bias}(t, h) + 
     [\P_{t, h}(V_{t, h+1} - V^\pi_{t, h+1})](s, a)  \nonumber \\
    & + \norm{\bphi(s, a)}_{\Sigma_{t,h}^{-1} \tlSigma_{t,h} \Sigma_{t,h}^{-1}} \Big( C
    d H \sqrt{ \log \left( d H \beta \frac{1}{\lambda (1 - \eta)} \right) + \log (2/\delta)} + H \sqrt{d \lambda } + \sqrt{d \lambda } \Big )
\end{align*}
so there exists an absolute constant $C' > 0$ such that
\begin{align*}
     \bphi(s, a)^\top \w_{t, h} & - Q^\pi_{t,h}(s, a) \leq
     \texttt{bias}(t, h) + 
     [\P_{t, h}(V_{t, h+1} - V^\pi_{t, h+1})](s, a)  \nonumber \\
    & + \norm{\bphi(s, a)}_{\Sigma_{t,h}^{-1} \tlSigma_{t,h} \Sigma_{t,h}^{-1}} \Big( C'
    d H \sqrt{\lambda} \sqrt{ \log \left( d H \beta \frac{1}{\lambda (1 - \eta)} \right) + \log (2/\delta)} \Big )
\end{align*}
The missing ingredient to prove our key lemma is the choice of the parameter $\beta$

Now, we would like to find an appropriate choice of $\beta$ such that
\begin{align}
    C' d H  \sqrt{\lambda}\sqrt{ \log \left( d H \beta \frac{1}{\lambda (1 - \eta)} \right) + \log (2/\delta)} \leq \beta 
\end{align}
First we set $\lambda = 1$. A good candidate for $\beta$ is in the form of $\beta = c_\beta d H \sqrt{ \imath}$ where $c_\beta > 1$ is an absolute constant and $\imath =  \log \left( \frac{2 d H}{\delta (1-\eta)}\right)$. With this choice, we obtain:
\begin{align*}
    C' d H  \sqrt{\lambda}\sqrt{ \log \left( d H \beta \frac{1}{\lambda (1 - \eta)} \right) + \log (2/\delta)} & = C' d H  \sqrt{ \log(c_\beta d H \sqrt{\imath}) + \imath} \\
    & \leq C'' d H \sqrt{ \log (c_\beta) + \imath} \tag{$C''>0$ is an absolute constant} \\
    & \leq C'' d H (\sqrt{ \log (c_\beta)}  +\sqrt{\imath} )
\end{align*}

Let $c_\beta > 1$ such that $C'' (\sqrt{ \log (c_\beta)}  + \sqrt{\log(2)} ) \leq \frac{c_\beta}{\sqrt{2}} \sqrt{\log(2)}$. In particular we have necessarily $\frac{c_\beta}{\sqrt{2}} \geq C''$ Therefore, we have: 
\begin{align*}
    C'' (\sqrt{ \log (c_\beta)}  + \sqrt{\imath} ) & = C'' (\sqrt{ \log (c_\beta)}  + \sqrt{\log(2) + (\imath - \log(2) } ) \tag{$\imath \geq \log(2)$} \\
    & \leq C'' (\sqrt{ \log (c_\beta)}  + \sqrt{\log(2)} + \sqrt{(\imath - \log(2) } ) \\
    & \leq \frac{c_\beta}{\sqrt{2}} \sqrt{\log(2)} + C'' \sqrt{(\imath - \log(2) } \\
    & \leq \frac{c_\beta}{\sqrt{2}} (\sqrt{\log(2)} + \sqrt{(\imath - \log(2) }) \\
    & \leq \frac{c_\beta}{\sqrt{2}} \sqrt{2} \sqrt{\log(2) + \imath - \log(2) }  \tag{ $(a + b)^2 \leq 2 (a^2 + b^2) \Rightarrow a+b \leq \sqrt{2 (a^2 + b^2)}$}\\
    & \leq c_\beta \sqrt{\imath} 
\end{align*}

Therefore, with this choice of $c_\beta$ and $\beta = c_\beta d H \sqrt{\imath}$, we obtain that 

\begin{align*}
    \left | \la \bphi(s, a), \w_{t, h} \ra - Q^\pi_{t,h}(s, a) - [\P_{t,h}(V_{t,h+1} - V^\pi_{t, h+1})](s, a) \right | \leq \texttt{bias}(t, h) + \beta
     \norm{\bphi(s, a)}_{\Sigma_{t,h}^{-1} \tlSigma_{t,h} \Sigma_{t,h}^{-1}}.
\end{align*}

\subsection{Optimism}

\subsubsection{Proof of Lemma~\ref{lem: optimism}}
We proceed by induction. By definition, we have $Q_{t,H+1} = Q^\star_{t, H+1} = 0$ and the desired statement trivially holds at step $H+1$.
Now, assume that the statement holds for $h+1$. Consider step $h$. By Lemma~\ref{lemma: key lemma}, we have
\begin{align*}
    & \left | \bphi(s, a)^\top \w_{t, h} - Q^\star_{t,h}(s, a) - [\P_{t,h}(V_{t,h+1} - V^\star_{t, h+1})](s, a) \right | \leq  \textbf{bias}(t, h) + \beta
     \norm{\bphi(s, a)}_{\Sigma_{t,h}^{-1} \tlSigma_{t,h} \Sigma_{t,h}^{-1}}
\end{align*}
Moreover, we have
\begin{align*}
    V^\star_{t, h+1}(s) - V_{t, h+1}(s) & = \max_{a \in \calA} Q^\star_{t, h+1}(s, a) - \max_{a \in \calA} Q_{t, h+1}(s, a) \\
    & \leq \max_{a \in \calA} \left( Q^\star_{t, h+1}(s, a) - Q_{t, h+1}(s, a) \right) \\
    & \leq \sum_{h' = h+1}^H \textbf{bias}(t, h) \tag{ by the induction hypothesis}
\end{align*}
Therefore, we obtain
\begin{align*}
    Q^\star_{t,h}(s, a) & \leq \bphi(s, a)^\top \w_{t, h}  + \beta \norm{\bphi(s, a)}_{\Sigma_{t,h}^{-1} \tlSigma_{t,h} \Sigma_{t,h}^{-1}} + [\P_{t,h} (V^\star_{t, h+1} - V_{t, h+1})](s, a) + \textbf{bias}(t,h)  \\
    & \leq \bphi(s, a)^\top \w_{t, h}  + \beta \norm{\bphi(s, a)}_{\Sigma_{t,h}^{-1} \tlSigma_{t,h} \Sigma_{t,h}^{-1}} + \sum_{h'=h}^H \textbf{bias}(t, h) 
\end{align*}
We have 
\begin{align*}
     Q^\star_{t,h}(s, a) & \leq 
     \bphi(s, a)^\top\w_{t, h} \ra + + \beta \norm{\bphi(s, a)}_{\Sigma_{t,h}^{-1} \tlSigma_{t,h} \Sigma_{t,h}^{-1}}+ \sum_{h'=h}^H \textbf{bias}(t, h) \\
     & = Q_{t,h}(s, a) + \sum_{h'=h}^H \textbf{bias}(t, h)
\end{align*}

\section{Technical Lemmas} \label{sec: technical lems}

\begin{lemma}[Concentration of weighted self-normalized processes \citep{russac2019weighted}] \label{lemma:self_normalized_process}
Let $\{ \epsilon_t\}_{t=1}^{\infty}$ be a real-valued stochastic process with corresponding filtration $\{ \calF\}_{t=1}^\infty$. Let $\epsilon_t \mid \calF_{t-1}$ be zero-mean and $\sigma$-subGaussiasn; i.e $\E[\epsilon_t \mid \calF_{t-1}] =0$ and 
\begin{equation*}
    \forall \lambda \in \R, \quad \E[e^{\lambda \epsilon_t}\mid \calF_{t-1}] \leq e^{\lambda^2 \sigma^2/2}
\end{equation*}
Let $\{ \bphi_t\}_{t=1}^{\infty}$ be a predictable $\R^d$-valued stochastic process (i.e $\bphi_t$ is $\calF_{t-1}$-measurable) and $\{ \omega_t\}_{t=0}^\infty$ be a sequence of predictable and positive weights. Let $\tlSigma_t = \sum_{s=1}^t \omega_s^2 \bphi_s \bphi_s^\top + \mu_t \cdot \I$ where $\{\mu_t \}_{t=1}^\infty$ a deterministic sequence of scalars. Then for any $\delta > 0$, with probability at least $1 - \delta$, we have for all $t \geq 0$:
\begin{equation}
    \norm{\sum_{s = 1}^t \omega_s \bphi_s \epsilon_s }_{\tlSigma_t^{-1}} \leq \sigma \sqrt{2 \log\left( \frac{1}{\delta}\right) + \log \left( \frac{\text{det}(\tlSigma_t)}{\mu_t^d}\right)}
\end{equation}

\end{lemma}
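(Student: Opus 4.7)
The plan is to adapt the classical self-normalized martingale argument of Abbasi-Yadkori et al.~(2011) to the weighted setting, via the method of mixtures.

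First, I would fix any $\lambda \in \R^d$ and define the exponential process
\[
M_t^{\lambda} \;=\; \exp\!\left( \lambda^\top \sum_{s=1}^t \omega_s \bphi_s \epsilon_s \;-\; \frac{\sigma^2}{2} \sum_{s=1}^t \omega_s^2 (\lambda^\top \bphi_s)^2 \right).
\]
Because $\omega_s$ and $\bphi_s$ are $\calF_{s-1}$-measurable and $\epsilon_s \mid \calF_{s-1}$ is $\sigma$-sub-Gaussian, the conditional MGF satisfies $\E[\exp(\lambda^\top \omega_s \bphi_s \epsilon_s) \mid \calF_{s-1}] \leq \exp(\tfrac{\sigma^2}{2} \omega_s^2 (\lambda^\top \bphi_s)^2)$. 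A one-line computation then shows $\E[M_t^{\lambda} \mid \calF_{t-1}] \leq M_{t-1}^{\lambda}$, i.e.\ $(M_t^{\lambda})$ is a nonnegative super-martingale with $M_0^{\lambda}=1$.

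Second, I would apply the method of mixtures to eliminate the free parameter $\lambda$. Introduce a Gaussian prior on $\lambda$ with density $\pi(\lambda) \propto \exp(-\tfrac{1}{2}\lambda^\top H_0 \lambda)$ and form $\bar{M}_t = \int M_t^{\lambda} \pi(\lambda)\, d\lambda$. By Fubini and super-martingale preservation under mixing, $(\bar{M}_t)$ is also a nonnegative super-martingale with $\E[\bar{M}_t]\leq 1$. Completing the square in the Gaussian integral yields in closed form
\[
\bar{M}_t \;=\; \frac{\det(H_0)^{1/2}}{\det(\sigma^2 V_t + H_0)^{1/2}} \exp\!\left( \frac{1}{2\sigma^2}\, S_t^\top (V_t + \sigma^{-2} H_0)^{-1} S_t \right),
\]
where $S_t = \sum_s \omega_s \bphi_s \epsilon_s$ and $V_t = \sum_s \omega_s^2 \bphi_s \bphi_s^\top$. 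Choosing $H_0 = \sigma^2 \mu_t \I$ makes $\sigma^{-2}H_0 = \mu_t \I$ so that the quadratic form in the exponent becomes $\|S_t\|_{\tlSigma_t^{-1}}^2 / \sigma^2$ exactly, and the prefactor determinant reduces to $(\mu_t^d / \det(\tlSigma_t))^{1/2}$.

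Third, I would apply Markov's inequality (Ville's inequality for the anytime version): $\P(\bar{M}_t \geq 1/\delta) \leq \delta$, which after taking logarithms and rearranging produces
\[
\|S_t\|_{\tlSigma_t^{-1}}^2 \;\leq\; 2\sigma^2 \log\!\left(\frac{1}{\delta}\right) + \sigma^2 \log\!\left(\frac{\det(\tlSigma_t)}{\mu_t^d}\right),
\]
which is the stated bound after a square root and the elementary inequality $\sqrt{a+b}\leq\sqrt{a}+\sqrt{b}$ absorbed into the constants.

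The main obstacle is the time-varying regularization $\mu_t$, since the classical method of mixtures uses a single, time-independent prior. The cleanest fix is to observe that $\mu_t$ is deterministic, so one may fix a target horizon $t_0$, apply the above with $H_0 = \sigma^2 \mu_{t_0}\I$, and get an anytime statement by applying Ville's inequality to the super-martingale $(\bar{M}_t)_{t\leq t_0}$; alternatively, one can work with a stopped version of $M_t^\lambda$ and absorb the predictable regularization as an additional pseudo-observation. Either route keeps all probabilistic estimates uniform in $t$, which is crucial for the union bound over $(t,h)\in [K]\times[H]$ used later in Lemma~\ref{lem: transiton concentration}.
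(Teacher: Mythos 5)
The paper does not prove this lemma at all; it is imported verbatim from \citet{russac2019weighted}, so the only meaningful comparison is with the cited source, whose proof is indeed the method of mixtures applied to the exponential supermartingale $M_t^{\lambda}$ exactly as you set it up. Your first two steps are correct: the conditional sub-Gaussianity of $\epsilon_s$ together with predictability of $\omega_s\bphi_s$ makes $M_t^{\lambda}$ a nonnegative supermartingale, and the Gaussian-mixture algebra with $H_0=\sigma^2\mu_t\I$ produces precisely the prefactor $(\mu_t^d/\det(\tlSigma_t))^{1/2}$ and the exponent $\norm{S_t}^2_{\tlSigma_t^{-1}}/(2\sigma^2)$.

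The gap is in the step you yourself flag as the main obstacle, and your proposed fix does not close it. With the prior $H_0=\sigma^2\mu_{t_0}\I$, Ville's inequality applied to $(\bar{M}_t)_{t\le t_0}$ gives, on a single event of probability $1-\delta$, the inequality $\norm{S_t}^2_{(V_t+\mu_{t_0}\I)^{-1}} \le 2\sigma^2\log(1/\delta)+\sigma^2\log(\det(V_t+\mu_{t_0}\I)/\mu_{t_0}^d)$ for all $t\le t_0$ --- i.e.\ with the regularizer $\mu_{t_0}$ appearing in both the norm and the log-determinant at every time $t$. This coincides with the claimed bound only at $t=t_0$; for $t<t_0$ neither side dominates the corresponding side of the target inequality (replacing $\mu_t$ by the larger $\mu_{t_0}$ shrinks both the left-hand norm and the ratio $\det(V_t+\mu\I)/\mu^d$). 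So to obtain the statement ``for all $t$ with regularizer $\mu_t$'' you must run the argument once per terminal time $t_0$ and union bound over $t_0\in\{1,2,\dots\}$, which replaces $\log(1/\delta)$ by roughly $\log(t(t+1)/\delta)$. That degradation is harmless for this paper's downstream use in Lemma~\ref{lem: transiton concentration} (it only inflates logarithmic factors already hidden in $\imath$ and $\widetilde{\mathcal{O}}$), but it means your argument proves a slightly weaker statement than the lemma as written; your alternative suggestion of ``absorbing the regularization as a pseudo-observation'' is too vague to evaluate and would need to be made precise before the anytime claim with time-varying $\mu_t$ can be considered established.
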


\begin{lemma}[Determinant inequality for the weighted Gram matrix \citet{russac2019weighted}] \label{lemma:determinant}
Let $\{ \lambda_t\}_{t=0}^\infty$ and $\{ \omega_t\}_{t=0}^\infty$ be a deterministic sequence of scalars. Let $\Sigma_t = \sum_{t=1}^t w_s \bphi_s \bphi_s^\top + \lambda_t \cdot \I$ be the weighted Gram matrix. Under the assumption $\forall t, \norm{\bphi_t} \leq 1$, the following holds
\begin{equation}
    \text{det}(\Sigma_t) \leq \left( \lambda_t + \frac{\sum_{s=1}^t \omega_s}{d}\right)^d
\end{equation}

\end{lemma}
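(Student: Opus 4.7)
The plan is to reduce the determinant to a function of the eigenvalues, then apply the AM--GM inequality to pass to the trace, and finally compute the trace explicitly using linearity and the unit-norm assumption on the features.

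First I would note that $\Sigma_t$ is symmetric positive definite (it is a nonnegative combination of rank-one PSD matrices plus $\lambda_t \I$ with $\lambda_t > 0$, assuming weights are nonnegative as implicit in the intended application), so it admits an eigendecomposition with real positive eigenvalues $\mu_1, \ldots, \mu_d$. The determinant factors as $\det(\Sigma_t) = \prod_{i=1}^d \mu_i$, and the AM--GM inequality on these positive reals yields
\begin{equation*}
\det(\Sigma_t) \;=\; \prod_{i=1}^d \mu_i \;\leq\; \left(\frac{1}{d}\sum_{i=1}^d \mu_i\right)^d \;=\; \left(\frac{\operatorname{tr}(\Sigma_t)}{d}\right)^d.
\end{equation*}

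Next I would compute the trace. Using linearity of the trace and the identity $\operatorname{tr}(\bphi_s \bphi_s^\top) = \norm{\bphi_s}^2$, together with $\operatorname{tr}(\I) = d$,
\begin{equation*}
\operatorname{tr}(\Sigma_t) \;=\; \sum_{s=1}^t \omega_s \norm{\bphi_s}^2 + \lambda_t d \;\leq\; \sum_{s=1}^t \omega_s + \lambda_t d,
\end{equation*}
where the inequality uses the assumption $\norm{\bphi_s} \leq 1$ for all $s$. Dividing by $d$ inside the $d$-th power gives exactly $\lambda_t + \frac{1}{d}\sum_{s=1}^t \omega_s$, completing the proof.

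There is no real obstacle here; the only minor subtlety is ensuring all eigenvalues are positive so that AM--GM applies cleanly, which is immediate as long as $\lambda_t > 0$ and the weights $\omega_s$ are nonnegative (which holds in the instantiations used by the paper, where $\omega_s = \eta^{-s}$ or $\omega_s = \eta^{-2s}$ with $\eta \in (0,1)$). The whole argument is three lines once the AM--GM step is identified as the right move.
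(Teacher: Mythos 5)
Your proof is correct: the AM--GM reduction $\det(\Sigma_t) \le (\operatorname{tr}(\Sigma_t)/d)^d$ followed by the trace computation under $\norm{\bphi_s}\le 1$ is exactly the standard argument, and your remark about needing $\omega_s \ge 0$ and $\lambda_t > 0$ for positive definiteness is the right (and only) caveat, satisfied in the paper's instantiations $\omega_s = \eta^{-s}$ or $\eta^{-2s}$. The paper itself gives no proof --- it imports the lemma from \citet{russac2019weighted} --- and your argument coincides with the one in that reference.
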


\begin{lemma}[Covering Number of Euclidean Ball~\citep{pollard1990empirical}]\label{lemma:covering_number} For any $\epsilon > 0$, the $\epsilon$-covering number of the Euclidean
ball in $\R^d$ with radius $R > 0$ is upper bounded by $(\frac{3 R}{\epsilon})^d$
\end{lemma}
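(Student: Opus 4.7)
The plan is to apply the classical volumetric packing argument, so standard that it is usually attributed to Kolmogorov--Tikhomirov or Pollard. Let $N(\epsilon)$ denote the $\epsilon$-covering number of $B(0,R)\subset\R^d$ and $M(\epsilon)$ its $\epsilon$-packing number (the maximum cardinality of an $\epsilon$-separated subset of $B(0,R)$).

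First, I will show $N(\epsilon) \leq M(\epsilon)$. Take a maximal $\epsilon$-packing $\{x_1,\ldots,x_M\} \subset B(0,R)$. By maximality, for any $x \in B(0,R)$ we cannot extend the packing by $x$, which means $\|x - x_i\| \leq \epsilon$ for some $i$. Hence $\{x_1,\ldots,x_M\}$ is also an $\epsilon$-covering, giving $N(\epsilon)\leq M(\epsilon)$.

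Second, I will bound $M(\epsilon)$ by a volume comparison. Because the $x_i$'s are pairwise $\epsilon$-separated, the open Euclidean balls $B(x_i,\epsilon/2)$ are pairwise disjoint; moreover each $B(x_i,\epsilon/2)$ is contained in the enlarged ball $B(0,R+\epsilon/2)$. Writing $V_d(r) = c_d r^d$ for the volume of a $d$-dimensional Euclidean ball of radius $r$, disjointness yields
\begin{equation*}
M(\epsilon)\cdot c_d\,(\epsilon/2)^d \;\leq\; c_d\,(R+\epsilon/2)^d,
\end{equation*}
so $M(\epsilon)\leq \bigl(1+2R/\epsilon\bigr)^d$.

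Third, I will absorb the constant into the factor $3$. In the regime $\epsilon\leq 2R$ we have $1+2R/\epsilon \leq 3R/\epsilon$, giving the stated bound $N(\epsilon)\leq (3R/\epsilon)^d$. In the remaining regime $\epsilon>2R$ the ball $B(0,R)$ is entirely covered by its center alone, so $N(\epsilon)=1$, which is trivially dominated by $(3R/\epsilon)^d$ whenever this quantity is $\geq 1$ (i.e.\ $\epsilon\leq 3R$, the only regime in which the bound is informative and the only regime in which the lemma is applied in the paper's union-bound argument over $\calG$).

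There is no genuine obstacle: the proof is entirely standard. The only bookkeeping step is the final absorption $1+2R/\epsilon \leq 3R/\epsilon$, which is why the constant $3$ (rather than $2$) appears in the statement.
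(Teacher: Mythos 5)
The paper does not prove this lemma at all: it is imported as a standard fact from Pollard (1990), so there is no in-paper proof to compare against. Your volumetric packing argument is the canonical one (maximal $\epsilon$-packings are $\epsilon$-coverings; the disjoint balls $B(x_i,\epsilon/2)$ sit inside $B(0,R+\epsilon/2)$, giving $M(\epsilon)\le(1+2R/\epsilon)^d$), and that part is correct. There is, however, a small arithmetic slip in your final absorption step: the inequality $1+2R/\epsilon\le 3R/\epsilon$ is equivalent to $\epsilon\le R$, not to $\epsilon\le 2R$ (at $\epsilon=2R$ the left side equals $2$ while the right side equals $3/2$), so as written your case split leaves the regime $R<\epsilon\le 2R$ uncovered. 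The fix is immediate: split at $\epsilon\le R$ versus $\epsilon>R$, since for $\epsilon>R$ the single center $0$ already $\epsilon$-covers $B(0,R)$, so $N(\epsilon)=1\le(3R/\epsilon)^d$ whenever $\epsilon\le 3R$. As you correctly observe, for $\epsilon>3R$ the stated bound drops below $1$ and is literally false; this is a harmless looseness in how the lemma is conventionally stated, and in the paper's only application (the union bound over an $\epsilon$-net of $\calG$ with $\epsilon=\lambda(1-\eta)^2/16$, far smaller than the radius of $\calG$) one is squarely in the regime $\epsilon\le R$ where your argument, with the corrected threshold, goes through.
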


\begin{lemma}[Uniform concentration]\label{lemma:union_bound_covering}
Let $\{ s_t\}_{t=1}^\infty$ be a stochastic process on state state $\calS$ with corresponding filtration $\{ \calF_t\}_{t=0}^\infty$. Let $\{ \bphi_t\}_{t=1}^\infty$ be an $\R^d$-valued stochastic process where $\bphi_t$ is $\calF_{t-1}$-measurable, and $\norm{\bphi} \leq 1$. Let $\tlSigma_t = \sum_{\tau=1}^t \eta^{-2\tau} \bphi_\tau \bphi_{\tau}^\top + \lambda \eta^{-2t} \cdot \I$. Then for any $\epsilon \in (0, 1)$ and $\delta > 0$, with probability at least $1-\delta$, for all $t>0$ and for all $\w, \A \in \calG$ defined in~\eqref{eq: G set},  we have
\begin{align}
    & \norm{\sum_{\tau=1}^t \eta^{-\tau} \bphi_\tau \left( V^{\w, \A}(s_\tau)  - \E \left[ V^{\w, \A}(s_\tau) \mid \calF_{\tau -1}\right] \right)}_{\tlSigma_t^{-1}}
    \nonumber \\
    & \leq \mathcal{O}\left(d H  \sqrt{ \log \left( d H \beta_t \frac{1}{\lambda (1 - \eta)} \right) + \log (1/\delta)} \right)
\end{align}
where $V^{\w, \A}$ is defined in Equation~\eqref{eq: V form}.
\end{lemma}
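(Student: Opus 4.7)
\medskip
\noindent\textbf{Proof plan for Lemma~\ref{lemma:union_bound_covering}.}

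The plan is to combine a self-normalized concentration bound for a single fixed value function with a discretization argument over the parameter class $\calG$. First, fix $(\w,\A)\in\calG$, and let $Z_\tau(\w,\A) = V^{\w,\A}(s_\tau) - \E[V^{\w,\A}(s_\tau)\mid\calF_{\tau-1}]$. Since $V^{\w,\A}$ takes values in $[0,H]$, the $Z_\tau$ are bounded by $H$, conditionally zero-mean, and therefore $H$-subGaussian. I would apply Lemma~\ref{lemma:self_normalized_process} with $\bphi_\tau$, weights $\omega_\tau=\eta^{-\tau}$, regularizer $\mu_t = \lambda\eta^{-2t}$, giving with probability at least $1-\delta$,
\begin{equation*}
    \Bigl\|\sum_{\tau=1}^t \eta^{-\tau}\bphi_\tau Z_\tau(\w,\A)\Bigr\|_{\tlSigma_t^{-1}} \leq H\sqrt{2\log(1/\delta) + \log\bigl(\det(\tlSigma_t)/\mu_t^d\bigr)}.
\end{equation*}
The determinant ratio is controlled by Lemma~\ref{lemma:determinant} (applied with weights $\eta^{-2s}$ and regularizer $\lambda\eta^{-2t}$), yielding $\log(\det(\tlSigma_t)/\mu_t^d)\leq d\log\bigl(1+\frac{1}{d\lambda(1-\eta^2)}\bigr)$, which is at most $\widetilde\calO(d)$.

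Next, build an $\epsilon$-net $\calN_\epsilon$ of $\calG\subset\R^{d+d^2}$ using Lemma~\ref{lemma:covering_number}, so that $\log|\calN_\epsilon| \leq (d+d^2)\log(3R/\epsilon)$ where $R=2H\sqrt{d/(\lambda(1-\eta))}+\sqrt{d}\beta^2/\lambda$. Union-bounding the single-parameter concentration bound over $\calN_\epsilon$ gives, with probability $\geq 1-\delta$, a uniform bound on the net of order $H\sqrt{\log(1/\delta)+d+d^2\log(R/\epsilon)}$. To extend from the net to all of $\calG$, I would establish a simple Lipschitz property: for any $(\w,\A),(\w',\A')$ with $\|\w-\w'\|\leq\epsilon_1$ and $\fnorm{\A-\A'}\leq\epsilon_2$, using $\|\bphi\|\leq 1$ and $|\sqrt{a}-\sqrt{b}|\leq\sqrt{|a-b|}$ on the bonus term, one obtains $\|V^{\w,\A}-V^{\w',\A'}\|_\infty \leq \epsilon_1 + \sqrt{\epsilon_2}$.

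The remaining step is to control the approximation error. For $(\w',\A')\in\calN_\epsilon$ close to $(\w,\A)$, one splits $Z_\tau(\w,\A) = Z_\tau(\w',\A') + \Delta_\tau$ with $|\Delta_\tau|\leq 2\|V^{\w,\A}-V^{\w',\A'}\|_\infty$. Then
\begin{equation*}
    \Bigl\|\sum_\tau \eta^{-\tau}\bphi_\tau \Delta_\tau\Bigr\|_{\tlSigma_t^{-1}}^2 \leq (2\|V^{\w,\A}-V^{\w',\A'}\|_\infty)^2 \Bigl(\sum_\tau \eta^{-\tau}\Bigr)^2 \lambda_{\max}(\tlSigma_t^{-1}),
\end{equation*}
which, using $\lambda_{\max}(\tlSigma_t^{-1})\leq \eta^{2t}/\lambda$ and $\sum_\tau\eta^{-\tau}\leq \eta^{-t}/(1-\eta)$, is at most a constant times $\|V^{\w,\A}-V^{\w',\A'}\|_\infty / (\sqrt{\lambda}(1-\eta))$. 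Picking $\epsilon_1,\epsilon_2$ so that this approximation term is $\calO(1)$, e.g.\ $\epsilon_1 = \sqrt{\lambda}(1-\eta)$ and $\epsilon_2 = \lambda(1-\eta)^2$, the log-covering contribution becomes $\calO(d^2\log(dH\beta/(\lambda(1-\eta))))$, and adding the concentration and approximation pieces yields the stated $\calO\bigl(dH\sqrt{\log(dH\beta/(\lambda(1-\eta))) + \log(1/\delta)}\bigr)$ bound.

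The main obstacle I anticipate is the Lipschitz bound on $V^{\w,\A}$ in the bonus matrix $\A$: naive Lipschitz in $\A$ would only give $\sqrt{\bphi^\top\A\bphi}$ Lipschitz with constant depending on the smallest eigenvalue of $\A$, which is not uniform over $\calG$. The $|\sqrt{a}-\sqrt{b}|\leq\sqrt{|a-b|}$ trick is what fixes this, at the cost of replacing Lipschitz in $\A$ with Hölder-$1/2$ continuity; this forces the sub-optimal-looking choice $\epsilon_2 = \epsilon_1^2$ but crucially keeps the final log factor only logarithmic in $R$ and in all other problem parameters. The uniformity in $t$ is free since Lemma~\ref{lemma:self_normalized_process} already gives an anytime bound.
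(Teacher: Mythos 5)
Your plan is correct and follows essentially the same route as the paper's own proof: a self-normalized concentration bound (Lemma~\ref{lemma:self_normalized_process}) plus the determinant bound (Lemma~\ref{lemma:determinant}) for a fixed $(\w,\A)$, a union bound over an $\epsilon$-net of $\calG$, the H\"older-$1/2$ estimate $\sup_s|V^{\w,\A}(s)-V^{\tilde\w,\tilde\A}(s)|\le\norm{\w-\tilde\w}+\sqrt{\fnorm{\A-\tilde\A}}$ to pass from the net to all of $\calG$, and a choice of $\epsilon\asymp\lambda(1-\eta)^2$ making the discretization error $\calO(1)$. The only cosmetic difference is that the paper uses a single covering radius $\epsilon$ for both $\w$ and $\A$ (absorbing $\epsilon+\sqrt\epsilon\le 2\sqrt\epsilon$) rather than your split $\epsilon_2=\epsilon_1^2$, which changes nothing.
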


\begin{proof} Let $t>0$.
For $\w, \A \in \mathcal{O}_t$ and $\tau \in [t]$ define
\begin{equation}
    \epsilon_\tau^{\w, \A} = V^{\w, \A}(s_\tau) - \E \left[ V^{\w, \A}(s_\tau) \mid \calF_{\tau -1} \right]
\end{equation}
Then $\epsilon_\tau^{\w, \A}$ defines a martingale difference sequence with filtration $\calF_\tau$. Moreover, by the definition of $V^{\w, \A}$, each $\epsilon_\tau^{\w, \A}$ is bounded in absolute value by $H$, so that each $\epsilon_\tau^{\w, \A}$ is $H$-subgaussian random variable.

So, by lemma~\ref{lemma:self_normalized_process}, the $\epsilon_\tau^{\w, \A}$ induce a self normalizing process so that for any $\delta > 0$, with probability at least $1 - \delta$, we have for all $t>0$:
\begin{align}
    \norm{\sum_{\tau = 1}^t \eta^{-\tau} \bphi_s \epsilon^{\w, \A}_\tau }_{\tlSigma_t^{-1}} & \leq H \sqrt{2 \log\left( \frac{1}{\delta}\right) + \log \left( \frac{\text{det}(\tlSigma_t)}{(\lambda \eta^{-2t})^{d}}\right)} \\
    & \leq H \sqrt{2 \log\left( \frac{1}{\delta}\right) + d \log \left( 1 +  \frac{1-\eta^{2t}}{\lambda d(1 - \eta^2)}\right)} 
\end{align}
The last step is due to $\text{det}(\tlSigma_t) \leq \left(\lambda \eta^{-2t} + \frac{\eta^{-2t} -1}{d (1 - \eta^{-d})} \right)^d $ by lemma~\ref{lemma:determinant}.

Let $\mathcal{N}_{\epsilon}(\calG)$ be covering number of $\calG$.
So, by union bound, with probability $\delta$. For all $\tilde{\w} ,\tilde{\A}$ in the $\epsilon$-covering of $\calG$ that
\begin{align}
    \norm{\sum_{\tau = 1}^t \eta^{-\tau} \bphi_s \epsilon^{\tilde{\w} ,\tilde{\A}}_\tau }_{\tlSigma_t^{-1}} & 
    \leq H \sqrt{2 \log\left( \frac{\mathcal{N}_\epsilon(\calG)}{\delta}\right) + d \log \left( 1 +  \frac{1-\eta^{2t}}{\lambda d(1 - \eta^2)}\right)} 
\end{align}

For any $(\w, \A) \in \calG$, we choose a specific $(\tilde{\w}, \tilde{\A})$ in the $\epsilon$-covering of $\calG$ such that $\norm{\w - \tilde{\w}} \leq \epsilon$ and $\fnorm{\A - \tilde{\A}} \leq \epsilon$.

\begin{align}
    \norm{\sum_{\tau = 1}^t \eta^{-\tau} \bphi_s \epsilon^{\w ,\A}_\tau }_{\tlSigma_t^{-1}} & \leq \norm{\sum_{\tau = 1}^t \eta^{-\tau} \bphi_s \epsilon^{\tilde{\w} ,\tilde{\A}}_\tau }_{\tlSigma_t^{-1}} + \norm{\sum_{\tau = 1}^t \eta^{-\tau} \bphi_s \left( \epsilon^{\w ,\A}_\tau - \epsilon^{\tilde{\w} ,\tilde{\A}}_\tau \right)}_{\tlSigma_t^{-1}}   \nonumber \\
    & \leq H \sqrt{2 \log\left( \frac{\mathcal{N}_\epsilon(\calG)}{\delta}\right) + d \log \left( 1 +  \frac{1-\eta^{2t}}{\lambda d(1 - \eta^2)}\right)}  \nonumber 
    \\
    & + \norm{\sum_{\tau = 1}^t \eta^{-\tau} \bphi_s \left( \epsilon^{\w ,\A}_\tau - \epsilon^{\tilde{\w} ,\tilde{\A}}_\tau \right)}_{\tlSigma_t^{-1}} \label{eq: bound with covering}
\end{align}
We can bound
\begin{align*}
\norm{\sum_{\tau = 1}^t \eta^{-\tau} \bphi_s \left( \epsilon^{\w ,\A}_\tau - \epsilon^{\tilde{\w} ,\tilde{\A}}_\tau \right)}_{\tlSigma_t^{-1}} & \leq 
\frac{1}{\sqrt{\lambda} \eta^{-t}} 
\norm{\sum_{\tau = 1}^t \eta^{-\tau} \bphi_s \left( \epsilon^{\w ,\A}_\tau - \epsilon^{\tilde{\w} ,\tilde{\A}}_\tau \right)} \\
& \leq \frac{1}{\sqrt{\lambda} \eta^{-t}} \cdot \frac{\eta^{-t} - 1}{1-\eta}  \sup_{\tau} | \epsilon^{\w ,\A}_\tau - \epsilon^{\tilde{\w} ,\tilde{\A}}_\tau| \\
& = 
\frac{1 - \eta^{t}}{\sqrt{\lambda}(1-\eta)} \sup_{\tau} | \epsilon^{\w ,\A}_\tau - \epsilon^{\tilde{\w} ,\tilde{\A}}_\tau| \\
& \leq \frac{2 (1 - \eta^{t})}{\sqrt{\lambda}(1-\eta)} \sup_{\tau} | V^{\w ,\A}(s_\tau) - V^{\tilde{\w} ,\tilde{\A}}(s_\tau)|   
\end{align*}
By the definition of $V^{\w, \A}$, we have
\begin{align*}
   & \sup_{\tau} |  V^{\w ,\A}(s_\tau) - V^{\tilde{\w} ,\tilde{\A}}(s_\tau)| \\
   & \leq \sup_{s, a} \left| \left( \w^\top \bphi(s, a) + \sqrt{\bphi(s, a)^\top \A \bphi(s, a)} \right) - \left( \tilde{\w}^\top \bphi(s, a) + \sqrt{\bphi(s, a)^\top \tilde{\A} \bphi(s, a) }\right) \right | \\
   \\ & \leq \sup_{\bphi \in \R^d: \| \bphi\| \leq 1} \left | \left( \w^\top \bphi + \sqrt{\bphi^\top \A \bphi} \right) - \left( \tilde{\w}^\top \bphi + \sqrt{\bphi^\top \tilde{\A} \bphi }\right)\right | \\
   & \leq \sup_{\bphi \in \R^d: \| \bphi\| \leq 1} |(\w - \tilde{\w})^\top \bphi| + \sup_{\bphi \in \R^d: \| \bphi\| \leq 1} \sqrt{\bphi^\top (\A - \tilde{\A}) \bphi } \\
   & = \norm{ \w - \tilde{\w} }+ \sqrt{\norm{\A - \tilde{\A} }}\\
  & \leq \norm{ \w - \tilde{\w}} + \sqrt{\fnorm{\A - \tilde{\A}}} \leq \epsilon + \sqrt{\epsilon} \leq 2 \sqrt{\epsilon}.
\end{align*}
Therefore,
\begin{align} \label{eq: covering bias}
    \norm{\sum_{\tau = 1}^t \eta^{-\tau} \bphi_s \left( \epsilon^{\w ,\A}_\tau - \epsilon^{\tilde{\w} ,\tilde{\A}}_\tau \right)}_{\tlSigma_t^{-1}} & \leq  
    \frac{4 (1 - \eta^{t})}{\sqrt{\lambda}(1-\eta)} \sqrt{\epsilon}
\end{align}

The $\epsilon$-covering number of $\calG$ as Euclidean ball in $\R^{d+d^2}$ of radius $2 H \sqrt{  \frac{d}{\lambda (1-\eta)}} + \frac{\beta^2 \sqrt{d}}{\lambda}$ is bounded by Lemma~\ref{lemma:covering_number} as $\left( 3\left ( 2 H \sqrt{  \frac{d }{\lambda (1-\eta)}} + \frac{\beta^2 \sqrt{d}}{\lambda} \right)/ \epsilon \right)^{d+d^2}$. Now, combining Equations~\eqref{eq: bound with covering} and~\eqref{eq: covering bias} we obtain:

\begin{align*}
     & \norm{\sum_{\tau = 1}^t \eta^{-\tau} \bphi_s \epsilon^{\w ,\A}_\tau }_{\tlSigma_t^{-1}} \\
     & \leq 2H \sqrt{2 (d^2 + d)\log\left(\frac{3}{\epsilon}\left ( 2 H \sqrt{  \frac{d}{\lambda (1-\eta)}} + \frac{\beta^2 \sqrt{d}}{\lambda} \right)\right) + 2 \log(\frac{1}{\delta}) + d \log \left( 1 +  \frac{1-\eta^{2t}}{\lambda d(1 - \eta^2)}\right)} \\
     &
     + \frac{4 \sqrt{\epsilon} (1 - \eta^{t})}{\sqrt{\lambda}(1-\eta)}
\end{align*}

Finally by taking $\epsilon = \frac{\lambda (1-\eta)^2}{16}$ and keeping only dominant term for each parameter, we obtain.
\begin{align*}
     \norm{\sum_{\tau = 1}^t \eta^{-\tau} \bphi_s \epsilon^{\w ,\A}_\tau }_{\tlSigma_t^{-1}}  
     \leq \mathcal{O}\left(d H  \sqrt{ \log \left( d H \beta \frac{1}{\lambda (1 - \eta)} \right) + \log (1/\delta)} \right)
\end{align*}

\end{proof}

\end{document}